\documentclass[numsec,webpdf,modern,medium,namedate]{oup-authoring-template}

\makeatletter

\def\ps@opening{%
  \def\@oddhead{}%
  \def\@evenhead{}%
  \def\@oddfoot{\hfil\thepage\hfil}%
  \def\@evenfoot{\hfil\thepage\hfil}}

\def\ps@headings{%
  \def\@oddhead{}%
  \def\@evenhead{}%
  \def\@oddfoot{\hfil\thepage\hfil}%
  \def\@evenfoot{\hfil\thepage\hfil}}

\makeatother

\onecolumn
\graphicspath{{Fig/}}
\usepackage[mathlines, switch]{lineno}

\theoremstyle{thmstyleone}%
\newtheorem{theorem}{Theorem}
\newtheorem{proposition}[theorem]{Proposition}%

\newtheorem{lemma}[theorem]{Lemma}%

\theoremstyle{thmstyletwo}%
\newtheorem{remark}{Remark}%

\theoremstyle{thmstylethree}%

\usepackage[onehalfspacing]{setspace}
\usepackage[fontsize=10pt]{fontsize}

\AtBeginDocument{
\raggedbottom

}
\usepackage{amsmath}
\usepackage{amssymb}
\usepackage{mathtools}
\usepackage{amsthm}
\usepackage{bbm}
\usepackage{amsfonts} 
\usepackage{xcolor}
\usepackage{nicefrac} 
\usepackage{amsbsy}
\usepackage{accents}
\usepackage{tikz}
\usetikzlibrary{arrows.meta} 
\usetikzlibrary{positioning}
\usetikzlibrary{bayesnet}
\usetikzlibrary{calc}

\usepackage{algpseudocode}

\begin{document}

\firstpage{1}

\title[Neural Generalized Mixed-Effects Models]{Neural Generalized Mixed-Effects Models}

\author[1]{Yuli Slavutsky}
\author[2]{Sebastian Salazar}
\author[1,2]{David M. Blei}

\address[1]{Department of Statistics, Columbia University, New York, USA}
\address[2]{Department of Computer Science, Columbia University, New York, USA}

\authormark{Slavutsky et al.}
\newcommand{\ubar}[1]{\underaccent{\bar}{#1}}

\newcommand\E{\mathbb{E}}
\newcommand\R{\mathbb{R}}
\newcommand\yvec{\boldsymbol{y}}
\newcommand\gammamat{\boldsymbol{\gamma}}
\newcommand\zmat{\boldsymbol{Z}}
\newcommand\xmat{\boldsymbol{X}}
\newcommand\gvec{\boldsymbol{g}}
\newcommand\profMat{\boldsymbol{W}}
\newcommand\supp{\mathrm{supp}}
\newcommand\D{\mathcal{D}}

\newcommand{\locloglik}[5]{\ell(#1,#2,#3;#4,#5)}
\newcommand{\locloglikM}[5]{\ell_{M}(#1,#2,#3;#4,#5)}

\newcommand{\locloglikMempt}{\log p_{M}}

\newcommand{\TruncL}[1]{\mathcal{L}_{#1}}
\newcommand{\EmpL}{\hat{\mathcal{L}}}
\newcommand{\FullL}{\mathcal{L}^*}

\newcommand{\TruncMaxSet}{\mathcal{A}_M}
\newcommand{\EmpMaxSet}{\hat{\mathcal{A}}}
\newcommand{\FullMaxSet}{\mathcal{A}^*}

\newcommand{\ParamSet}{\mathcal{A}}

\newcommand{\NumL}[1]{\tilde{\mathcal{L}}_{#1}}
\newcommand{\NumMaxSet}[1]{\tilde{\mathcal{A}}_{#1}}
\newcommand{\Hnum}{\tilde{H}}
\newcommand{\OdeErr}[1]{\varepsilon_{#1}}

\abstract{
Generalized linear mixed-effects models (GLMMs) are widely used to analyze grouped and hierarchical data. In a GLMM, each response is assumed to follow an exponential-family distribution where the natural parameter is given by a linear function of observed covariates and a latent group-specific random effect. Since exact marginalization over the random effects is typically intractable, model parameters are estimated by maximizing an approximate marginal likelihood. In this paper, we replace the linear function with neural networks. The result is a more flexible model, the \emph{neural generalized mixed-effects model} (NGMM), which captures complex relationships between covariates and responses. To fit  NGMM to data, we introduce an efficient optimization procedure that maximizes the approximate marginal likelihood and is differentiable with respect to network parameters. We show that the approximation error of our objective decays at a Gaussian-tail rate in a user-chosen parameter. On synthetic data, NGMM improves over GLMMs when covariate–response relationships are nonlinear, and on real-world datasets it outperforms prior methods. Finally, we analyze a large dataset of student proficiency to demonstrate how NGMM can be extended to more complex latent-variable models.
}

\maketitle
\thispagestyle{opening}
\pagestyle{headings}

\section{Introduction}

In many datasets, observations are organized into groups, such as students within schools, patients within clinics, or repeated measurements within subjects. In such settings, we can consider two sources of variation: a source that is shared across all groups, and a source that is group-specific. Mixed-effects models are designed for this structure. They decompose the effect of covariates into a \emph{fixed} component, which captures the relationship common to all groups, and a \emph{random} component, which captures group-specific deviations drawn from a common distribution.

The use of mixed-effects models dates back to the 1960s (see \citet{bryk1992hierarchical} for a historical perspective) and they are now widely employed to analyze grouped data. Specific applications include studies grouped by participants and experimental conditions \citep{baayen2008mixed, judd2012treating}, and by geographic regions \citep{gelman2007analysis}. Grouped designs are especially common in genome-wide association research \citep{zhou2012genome, listgarten2010correction, zhang2010mixed}, neuro-imaging \citep{bernal2013statistical, visscher2003mixed, payne2015revisiting}, and clinical trials \citep{van2023lecanemab}. Another canonical example is in education research (see, e.g., \citet{raudenbush1999synthesizing, nye2000effects, lyu2023estimating}), where students are nested within schools, which may themselves be nested within districts, cities, or countries.

Formally, a linear mixed-effects model (LMM) can be written as 
\begin{equation} \label{eq:linear}
    y_{ij} = x_{ij} \beta + z_{ij} \gamma_j + \epsilon_{ij},
\end{equation}
where $j\in \{1, \dots, m\}$ indexes the observed groups, $x_{ij} \in \R^p$ denotes the covariate vector for subject $i\in\{1,\dots, n_j\}$ in group $j$, $z_{ij} \in \R^{q}$ are group-level dependent covariates, and $y_{ij} \in \R$ is the corresponding response.
Our main case of interest is when $z_{ij}$ are identical for all $i$ in the same group, so that the random-effects covariates are group-specific but not subject-specific.

Here $\beta \in \R^p$ is an unknown \emph{fixed effect} coefficient vector, $\gamma_j$ is a per-group \emph{random effect} coefficient vector\footnote{Throughout the paper, expressions such as $x_{ij}\beta$ denote inner products and are written without transpose for notational simplicity.}, and $\epsilon_{ij}\sim \mathcal{N}(0, \sigma^2_\epsilon)$ is observational noise. A common special case is the \textit{random intercept model}, where $z_{ij}=1$ for all $i$ and $j$. When $z_{ij}$ varies across subjects within a group, the term $z_{ij}\gamma_j$ allows the effect of these covariates to differ by group. In that case, the model includes \emph{random slopes}; more generally, it allows group-specific deviations in the coefficients multiplying $z_{ij}$.

In a mixed-effects model, the random effects $\gamma_j \in \R^q$ are drawn from a prior distribution $\gamma_j \sim \pi$. It is usually assumed to have zero mean, which lets us interpret $\gamma_j$ as the deviation of group $j$ from the overall population mean. Most often, the random-effects prior is taken to be an independent, centered Gaussian distribution
$\gamma_j \sim \mathcal{N}(0, \Sigma)$, with $\Sigma = \sigma^2_\gamma I$.

Given a grouped dataset of covariate-response pairs, mixed-effects models are estimated by maximizing the marginal log likelihood. When the response is Gaussian and the random effects are Gaussian, the marginal likelihood has a closed form, which makes estimation substantially simpler than in the general case.

A natural extension of LMMs is \emph{generalized linear mixed models} (GLMMs) \citep{McCullaghNelder1989GLM}, which allow the response to follow any exponential-family distribution.
In a GLMM, the conditional mean of $y_{ij}$ is linked to the linear predictor through a known \emph{link function} $\eta$. Specifically, 
\begin{equation} \label{eq:glm}
\E[y_{ij} \mid  x_{ij}, z_{ij}, \beta, \gamma_j] = \mu_{ij},
\qquad
\mu_{ij} = \eta^{-1} \left( x_{ij}\beta + z_{ij}\gamma_j \right).
\end{equation}
For general exponential families, the marginal likelihood does not admit a closed form. Therefore, fitting and prediction require an approximation (see \S \ref{sec:related} for further discussion).

While GLMMs are widely used, many modern applications exhibit a nonlinear relationship between covariates and response.  To this end, we extend GLMMs by replacing their linear predictors with neural networks. Our \emph{neural generalized mixed-effects model} (NGMM) is given by
\begin{equation} \label{eq:our_model}
\E[y_{ij} \mid x_{ij}, z_{ij}, \gamma_j; \theta, \psi] = \mu_{ij},
\qquad
\mu_{ij} = \eta^{-1}\left(f_\theta(x_{ij}) + g_\psi(z_{ij}) \gamma_j \right),
\end{equation}
\vspace{1em}

Here $f_\theta$ and $g_\psi$ are neural networks with unknown parameters $\theta$ and $\psi$, and  $\gamma_j \in \R^k$ are group-specific latent variables, where
\begin{equation} 
\label{eq:prior}
    \gamma_j \sim \mathcal{N}(0, I_k).
\end{equation}
Since it uses neural networks, the NGMM can capture complex nonlinear relationships between covariates and response.
As in the linear case, our main focus is on the case where $z_{ij}$ are identical for all $i$ in the $j$-th group.

Our goal is to fit NGMM by maximizing the log marginal probability of the data. However, as with the linear GLMM, this objective is intractable and we must approximate it. Moreover, because we are fitting neural networks, our approximation should be compatible with automatic differentiation and optimization.

To address this, we use a change of variables to approximate the marginal likelihood with an ordinary differential equation (ODE). The resulting objective function is differentiable, and we maximize it with stochastic optimization, which scales to large datasets.

Once fitted, NGMM supports prediction for observations from both previously observed and new groups. For Gaussian and binary responses, it retains a key generalization property of classical mixed-effects models: for new groups, the Bayes-optimal predictor depends solely on the fixed-effects component.

This allows NGMM to outperform existing neural mixed-effects approaches that rely on closed-form marginal likelihoods in conjugate Gaussian or Poisson settings, while also accommodating non-conjugate cases such as Bernoulli outcomes. We show this on synthetic data, on prediction of Airbnb listing prices and bedroom counts, and on control-versus-experimental batch identification in microscopy images. Finally, we show that our approach generalizes to latent models 
by extending the multi-country PISA student proficiency study to a mixed-effects formulation with latent variable prediction.

The rest of this paper is organized as follows. Related work is discussed in \S \ref{sec:related}. 
In \S \ref{sec:method} we present the proposed estimation framework: the approximate objective is derived in \S \ref{sec:obj}, estimation and optimization are detailed in \S \ref{sec:estimation}, and prediction, including generalization to new groups, is discussed in \S \ref{sec:prediction}.
Theoretical properties are established in \S \ref{sec:theory}.
Synthetic experiments are reported in \S \ref{sec:synthetic}. 
Results on real data are reported in \S \ref{sec:experiments}.
In \S \ref{sec:airbnb} we compare NGMM with prior neural mixed-effects methods on the Airbnb dataset, and in \S \ref{sec:rxrx} we evaluate NGMM on binary prediction for microscopy images, a setting not supported by earlier approaches.
An analysis of an extended latent student proficiency model is presented in \S \ref{sec:pisa}. We conclude with a discussion in \S \ref{sec:discuss}.

\subsection{Related work} \label{sec:related}

In generalized linear mixed-effects models, the marginal likelihood no longer admits a closed form.
Therefore, fitting the model requires analytic or numerical approximations.
Common strategies include approximating the link or mean function (e.g., via probit or hyperbolic–tangent surrogates \citep{generalized}); applying Taylor expansions to the integrand; or using Laplace approximation. Perhaps most widely used is Gauss–Hermite quadrature, which numerically evaluates the integral of the product of the likelihood and the Gaussian prior over the random effects. For a detailed discussion and comparison of these methods, see Section 7.1.2 in \citep{demidenko2013mixed}.

However, once the linear component is replaced with a neural network, optimizing a marginal likelihood additionally requires the approximation to be differentiable with respect to the network weights. 
Due to this computational challenge, few approaches have previously incorporated neural networks into mixed effect models. 

\citet{Simchoni_Rosset1, Simchoni_Rosset2} addressed this challenge in the special case of an identity link, that is, a Gaussian response model, where the Gaussian prior on $\gamma_j$ and the Gaussian likelihood remain conjugate, and therefore the resulting marginal likelihood is differentiable with respect to the network parameters, enabling direct optimization. Similar methods for this conjugate Gaussian setting, differing mainly in their optimization strategies, were later proposed by \citet{kilian2023mixed, pmlr-v202-lee23k}.

Similarly, \citet{lee2023subject} assume a multiplicative Gamma random effect on the Poisson mean, which induces conjugacy and yields a closed-form marginal likelihood. In contrast, NGMM employs an additive Gaussian random effect, leading to a non-conjugate model.

Several methods have advanced the treatment of non-conjugate models by introducing surrogates for the exact marginal likelihood, often leading to complex optimization procedures.
In \citet{MeNets}, optimization requires repeated inversion of large covariance matrices to update the random effects, whereas the method of \citet{DeepGLMM} relies on a multi-stage procedure involving importance sampling, structured factor covariances, and variable selection. More recently, 
\citet{mandel2023neural} proposed a method that relies on Laplace-approximated penalized quasi-likelihood, while \citet{tschalzev2024enabling} use Monte Carlo sampling.

\section{NGMM: Neural Generalized Mixed-Effects Models} \label{sec:method}

In this section, we present the NGMM model, the parameter estimation procedure, and its use for prediction.

The NGMM model in Equations \eqref{eq:our_model} and \eqref{eq:prior} replaces the linear predictors of a GLMM with neural networks  $f_\theta: \R^p \to \R$ and $g_\psi: \R^q \to \R^k$, respectively. Here the parameters of the networks, $\theta$ and $\psi$, are treated as fixed effects. 
This yields a more flexible mean structure for the response, allowing a nonlinear dependence on the observation-level covariates $x_{ij}$.

The model is illustrated in Figure \ref{fig:illustration}.

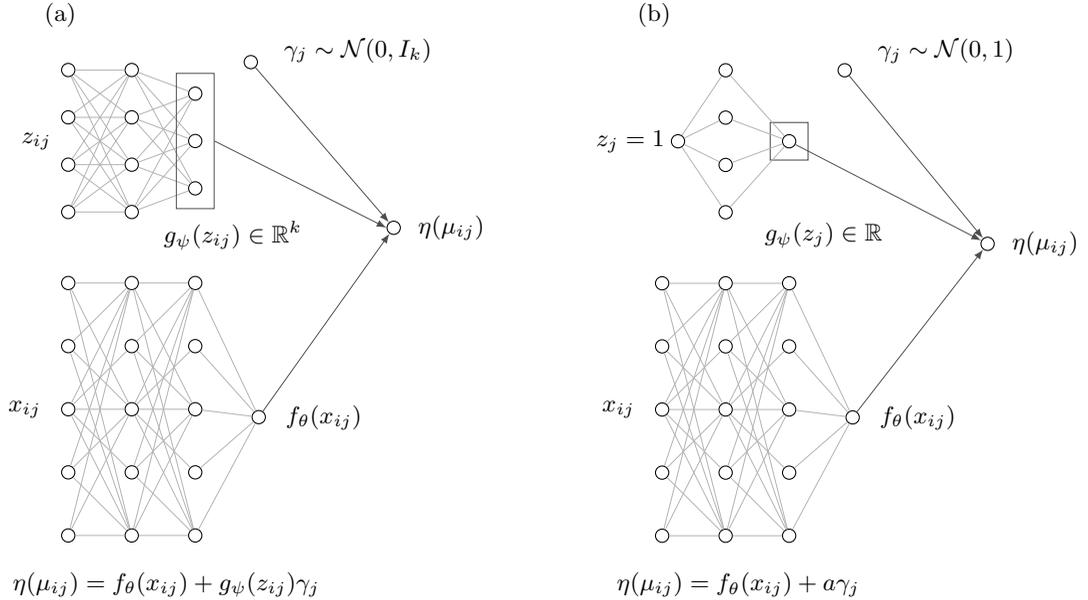
\begin{figure}
\centering

%======================= LEFT: general model (a) =======================
\begin{minipage}[c]{0.48\textwidth}
\centering
\begin{tikzpicture}[
  x=1.05cm,
  y=1.05cm,
  >=latex,
  neuron/.style={circle, draw, minimum size=5pt, inner sep=0pt},
  annot/.style={font=\small},
  conn/.style={line width=0.3pt, draw=black!30},
  arrowconn/.style={line width=0.3pt, draw=black!70, -latex},
  plate/.style={draw=black!70, line width=0.3pt, inner sep=0pt}
]

\node[annot] at (-2.5,2.7) {(a)};

\foreach \i in {1,...,4} {
  \node[neuron] (z\i)  at (-2.4, 0.6*\i-0.4) {};
  \node[neuron] (zh\i) at (-1.6, 0.6*\i-0.4) {};
}
\foreach \i in {1,...,4} {
  \draw[conn] (z\i) -- (zh1);
  \draw[conn] (z\i) -- (zh2);
  \draw[conn] (z\i) -- (zh3);
  \draw[conn] (z\i) -- (zh4);
}

\foreach \i in {1,...,3} {
  \node[neuron] (gz\i) at (-0.8, 0.6*\i-0.1) {};
}
\foreach \i in {1,...,4} {
  \draw[conn] (zh\i) -- (gz1);
  \draw[conn] (zh\i) -- (gz2);
  \draw[conn] (zh\i) -- (gz3);
}

\node[plate, minimum width=0.5cm, minimum height=1.8cm] (gplate) at (-0.8,1.1) {};

\node[annot] at (-2.8,1.1) {$z_{ij}$};
\node[annot,below=1pt of gplate] {\hspace*{1cm}$g_\psi(z_{ij}) \in \R^k$};

\node[neuron] (gamma) at (-0.1,2.1) {};
\node[annot,anchor=west] at (0.2,2.25) {$\gamma_j \sim \mathcal N(0,I_k)$};

\foreach \i in {1,...,5} {
  \node[neuron] (x\i)    at (-2.4,-0.8*\i+0.1) {};
  \node[neuron] (xh\i)   at (-1.6,-0.8*\i+0.1) {};
  \node[neuron] (xhII\i) at (-0.8,-0.8*\i+0.1) {};
}

\foreach \i in {1,...,5} {
  \draw[conn] (x\i) -- (xh1);
  \draw[conn] (x\i) -- (xh3);
  \draw[conn] (x\i) -- (xh5);
}
\draw[conn] (x3) -- (xh2);
\draw[conn] (x3) -- (xh4);

\foreach \i in {1,...,5} {
  \draw[conn] (xh\i) -- (xhII1);
  \draw[conn] (xh\i) -- (xhII3);
  \draw[conn] (xh\i) -- (xhII5);
}

\draw[conn] (xh3) -- (xhII2);
\draw[conn] (xh3) -- (xhII4);

\node[neuron] (fx) at (0,-2.4) {};
\foreach \i in {1,...,5} \draw[conn] (xhII\i) -- (fx);

\node[annot,left=4pt of x3] {$x_{ij}$};
\node[annot,right=4pt of fx] {$f_\theta(x_{ij})$};

\node[neuron] (eta) at (1.7,0.0) {};
\node[annot,right=3pt of eta] {$\eta(\mu_{ij})$};

\draw[arrowconn] (fx) -- (eta);
\draw[arrowconn] (gplate.east) -- (eta.west);
\draw[arrowconn] (gamma) -- (eta);

\end{tikzpicture}

\vspace{0.6em}
\noindent\hspace*{-2.0cm}{\small $\eta(\mu_{ij}) = f_\theta(x_{ij}) + g_\psi(z_{ij})\gamma_j$}
\vspace{1.2em}
\end{minipage}
\hfill
%======================= RIGHT: random intercept (b) ====================
\begin{minipage}[c]{0.48\textwidth}
\centering
\begin{tikzpicture}[
  x=1.05cm,
  y=1.05cm,
  >=latex,
  neuron/.style={circle, draw, minimum size=5pt, inner sep=0pt},
  annot/.style={font=\small},
  conn/.style={line width=0.3pt, draw=black!30},
  arrowconn/.style={line width=0.3pt, draw=black!70, -latex},
  outbox/.style={draw=black!70, rectangle, minimum width=0.5cm, minimum height=0.5cm, inner sep=0pt}
]

\node[annot] at (-2.5,2.7) {(b)};

\node[neuron] (gammaB) at (-0.1,2.0) {};
\node[annot,anchor=west] at (0.2,2.25) {$\gamma_j \sim \mathcal N(0,1)$};

\node[annot] (zrightlabel) at (-2.8,1.1) {$z_j = 1$};

\node[neuron] (zInB) at (-2.2,1.1) {};
\foreach \i in {1,...,4} {
  \node[neuron] (zhB\i) at (-1.6, 0.6*\i-0.4) {};
}

\node[outbox] (gboxB) at (-0.8,1.1) {};
\node[neuron] (zOutB) at (-0.8,1.1) {};
\node[annot,below=20pt of gboxB] {\hspace*{0.8cm} \small{$g_\psi(z_j)\in \R$}};

\foreach \i in {1,...,4} {
  \draw[conn] (zInB) -- (zhB\i);
  \draw[conn] (zhB\i) -- (zOutB);
}

\foreach \i in {1,...,5} {
  \node[neuron] (x\i)    at (-2.4,-0.8*\i+0.1) {};
  \node[neuron] (xh\i)   at (-1.6,-0.8*\i+0.1) {};
  \node[neuron] (xhII\i) at (-0.8,-0.8*\i+0.1) {};
}
\foreach \i in {1,...,5} {
  \draw[conn] (x\i) -- (xh1);
  \draw[conn] (x\i) -- (xh3);
  \draw[conn] (x\i) -- (xh5);
}
\draw[conn] (x3) -- (xh2);
\draw[conn] (x3) -- (xh4);

\foreach \i in {1,...,5} {
  \draw[conn] (xh\i) -- (xhII1);
  \draw[conn] (xh\i) -- (xhII3);
  \draw[conn] (xh\i) -- (xhII5);
}
\draw[conn] (xh3) -- (xhII2);
\draw[conn] (xh3) -- (xhII4);

\node[neuron] (fxB) at (0,-2.4) {};
\foreach \i in {1,...,5} \draw[conn] (xhII\i) -- (fxB);

\node[annot,left=4pt of x3] {$x_{ij}$};
\node[annot,right=4pt of fxB] {$f_\theta(x_{ij})$};

\node[neuron] (etaB) at (1.7,-0.2) {};
\node[annot,right=3pt of etaB] {$\eta(\mu_{ij})$};

\draw[arrowconn] (fxB) -- (etaB);
\draw[arrowconn] (gammaB) -- (etaB);
\draw[arrowconn] (zOutB) -- (etaB);

\end{tikzpicture}

\vspace{0.6em}
\noindent\hspace*{-2.5cm}{\small $\eta(\mu_{ij}) = f_\theta(x_{ij}) + a\gamma_j$}
\vspace{1.2em}
\end{minipage}

\caption{Illustration of the NGMM model. (a) General model. (b) Random-intercept special case with
$k = 1$, $z_j=1$, and $g_\psi(z_j) = a \in \R$.}
\label{fig:illustration}
\end{figure}

\subsection{Approximate marginal likelihood} \label{sec:obj}

Assume we are given a dataset $\D = \{\{(y_{ij},x_{ij}, z_{ij})\}_{i=1}^{n_j} \}_{j=1}^m$.
Denote $n=\sum_{j=1}^m n_j$ and let $\yvec$ collect all responses in $\D$. Similarly, let $\xmat \in \R^{n \times p}$ and $\zmat \in \R^{n \times q}$ be the corresponding covariate matrices.
Accordingly, let $\xmat_j \in \R^{n_j \times p}$, $\zmat_j \in \R^{n_j \times q}$, and $\yvec_j \in \R^{n_j}$ be the coordinates of the $j$-th group.
Our goal is to fit NGMM by maximizing marginal likelihood:

\begin{equation}
\left(\hat{\theta},\hat{\psi}\right)=\arg\max_{\theta,\psi}p(\yvec\mid \xmat,\zmat;\theta,\psi)
\end{equation}
where 
\begin{align}
p(\yvec \mid \xmat,\zmat;\theta,\psi)
&=
\prod_{j=1}^{m} p(\yvec_{j} \mid \xmat_j, \zmat_j;\theta,\psi)
=\prod_{j=1}^{m}
\int_{\mathbb{R}^{k}}
\left(
\prod_{i=1}^{n_j}
p(y_{ij} \mid x_{ij},z_{ij},\gamma;\theta,\psi)
\right)
\phi_{k}(\gamma) d\gamma,
\end{align}
with $\phi_k$ denoting the density of the $k$-dimensional Standard Gaussian distribution.

For clarity of presentation, we focus on a single group $j\in\{1,\dots,m\}$ and drop the group index when no ambiguity arises, writing $(x,y,z,\gamma)$ instead of $(\xmat_j,\yvec_j,\zmat_j,\gamma_j)$, where $y\in\supp(h)^{n_j}$ and $x\in\R^{n_j\times p}$.

Within a given group, conditionally on the covariates $\{(x_{ij}, z_{ij} )\}_{i=1}^{n_j}$ and $\gamma_j$, the outcomes $\{y_{ij}\}_{i=1}^{n_j}$ are independent. Equivalently, under the simplified notation, conditionally on $(x,z,\gamma)$, the entries of $y$ are independent, and the group-level marginal likelihood is
\begin{equation}\label{eq:q_dim_integral}
p(y \mid x,z;\theta,\psi)
=
\int_{\R^k}
p(y \mid x,z,\gamma;\theta,\psi)\,\phi_k(\gamma)\, d\gamma,
\end{equation}
where
\begin{equation}\label{eq:group_conditional_factor}
p(y \mid x,z,\gamma;\theta,\psi)
=
\prod_{i=1}^{n_j} p(y_i \mid x_i,z_i,\gamma;\theta,\psi),
\end{equation}
and $x_i$ and $z_i$ denote the $i$-th row of $x$ and $z$.

Below we focus on the main case where the \emph{random-effects design depends only on the group}, and address the general case in Appendix \ref{sup:multi}. In this case, $z_i=\mathrm{z} \in \R^q$ for all $i$ and therefore,
\begin{equation}\label{eq:shorthand_pyxt_0}
p(y \mid x,z,\gamma;\theta,\psi)
=
\prod_{i=1}^{n_j} p(y_i \mid x_i, \mathrm{z}, \gamma;\theta,\psi).
\end{equation}

The integral in \eqref{eq:q_dim_integral} generally has no closed form.
While this difficulty also arises in classical GLMMs, in our setting $\theta$ and $\psi$ parameterize neural networks. Their estimation therefore requires not only numerically approximating the marginal likelihood, but doing so in a way that is compatible with gradient-based optimization over $(\theta,\psi)$.

To meet this requirement, we derive an equivalent ordinary differential equation (ODE) formulation that replaces numerical integration, and whose solution admits propagation of derivatives with respect to $(\theta,\psi)$.

Given the neural predictors, for the $i$-th observation, conditionally on $(x_i, \mathrm{z},\gamma)$, the response $y_i$ follows a one-parameter exponential family, which can be expressed as
\begin{equation}\label{eq:expfam_general}
p(y_i \mid \vartheta_i)
=
h(y_i)\exp\Bigl(T(y_i)\,\vartheta_i
-
A\bigl(\vartheta_i \bigr)\Bigr),
\end{equation}
where $h(y_i)$ is the base measure, $\vartheta_i = \vartheta(x_i,\mathrm{z},\gamma; \theta,\psi)$ is the natural parameter, and $T$ is a sufficient statistic for $y_i$.
We focus on the standard GLM class of one-parameter exponential families, for which $T(y_i)=y_i$.
In this case the log-partition function is
\begin{equation}
A(t) \coloneq \log\int h(u)\exp\bigl(T(u)\,t\bigr) \, du,
\end{equation}
where $u$ denotes the integration variable over $\mathcal{Y}$ (and the integral is interpreted as a sum when $\mathcal{Y}$ is discrete).
The conditional mean satisfies
\begin{equation}\label{eq:mu1}
\E[y_i\mid \vartheta_i] = A'(\vartheta_i).
\end{equation}
Considering the link specification in Equation \eqref{eq:our_model},
\begin{equation}\label{eq:mu2}
\mu(x_i,\mathrm{z},\gamma; \theta,\psi)  =  \eta^{-1} \left(f_\theta(x_i) +g_\psi(\mathrm{z})\gamma \right),
\end{equation}
we require
\begin{equation}
A'(\vartheta_i) = \eta^{-1} \left(f_\theta(x_i) +g_\psi(\mathrm{z})\gamma \right).
\end{equation}
This determines the natural parameter as
\begin{equation}
\vartheta(x_i,\mathrm{z},\gamma;\theta,\psi)
=
(A')^{-1}\Bigl(\eta^{-1}\bigl(f_{\theta}(x_i)+g_{\psi}(\mathrm{z})\gamma\bigr)\Bigr),
\end{equation}
for an arbitrary (possibly non-canonical) link $\eta$.
In the canonical-link case, where $\eta(\mu)=(A')^{-1}(\mu)$, we obtain the simpler form
\begin{equation}
\vartheta(x_i,\mathrm{z},\gamma;\theta,\psi)=f_{\theta}(x_i)+g_{\psi}(\mathrm{z})\gamma.
\end{equation}

To obtain a differentiable approximation, we reduce the
integral in \eqref{eq:q_dim_integral} to a one-dimensional integral.
Denoting
\begin{equation}
t \coloneqq g_\psi(\mathrm{z})\gamma \in \R,
\end{equation}
we redefine the natural parameter in terms of $t$ as
\begin{equation}
\vartheta(x_i,t;\theta,\psi)
\coloneqq
(A')^{-1}\Bigl(\eta^{-1}\bigl(f_\theta(x_i)+t\bigr)\Bigr),
\end{equation}
and define the shorthand conditional density
\begin{equation}\label{eq:shorthand_pyxt}
p(y_i\mid x_i,t;\theta,\psi)
\coloneqq
h(y_i)\exp\left(y_i \, \vartheta(x_i,t;\theta,\psi)-A\bigl(\vartheta(x_i,t;\theta,\psi)\bigr)\right).
\end{equation}
Accordingly, Equation \eqref{eq:shorthand_pyxt_0} becomes
\begin{equation}\label{eq:group_conditional_t}
p(y \mid x,t;\theta,\psi)
=
\prod_{i=1}^{n_j} p(y_i\mid x_i,t;\theta,\psi).
\end{equation}

\begin{lemma}\label{lemma:int}
Fix a group $(x,y,\mathrm{z})$ and parameters $(\theta,\psi)$, and let $a=g_\psi(\mathrm{z})\in\mathbb{R}^k$.
Then
\begin{equation}
\int_{\mathbb{R}^{k}} p\left(y \mid x, a\gamma;\theta,\psi \right)\,\phi_k(\gamma)\,d\gamma
=
\int_{\mathbb{R}} p\left(y \mid x, \Vert a\Vert\xi;\theta,\psi \right)\,\phi_1(\xi)\,d\xi.
\end{equation}
\end{lemma}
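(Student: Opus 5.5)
The plan is to reduce the $k$-dimensional integral to a one-dimensional one by observing that the integrand depends on $\gamma$ only through the scalar $t=a\gamma$. Write
\[
F(t)\coloneqq p(y\mid x,t;\theta,\psi)=\prod_{i=1}^{n_j}p(y_i\mid x_i,t;\theta,\psi),
\]
as in \eqref{eq:group_conditional_t}. This is a measurable, nonnegative function of $t\in\R$. Then the left-hand side is $\E[F(a\gamma)]$ with $\gamma\sim\mathcal{N}(0,I_k)$. The claim therefore reduces to a distributional identity: $a\gamma$ and $\Vert a\Vert\xi$, with $\xi\sim\mathcal{N}(0,1)$, have the same law.

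First I would treat the degenerate case $a=0$ separately. Both sides then equal $F(0)$, since $\int\phi_k=\int\phi_1=1$.

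For $a\neq 0$, I would give an explicit change of variables. Let $u_1=a/\Vert a\Vert$ and complete it to an orthonormal basis $u_1,\dots,u_k$ of $\R^k$. Let $Q$ be the orthogonal matrix with rows $u_1,\dots,u_k$, and substitute $\gamma=Q^\top\omega$. Because $|\det Q|=1$ and $\phi_k(Q^\top\omega)=\phi_k(\omega)$ by rotational invariance of the standard Gaussian, the integral becomes
\[
\int_{\R^k}F\bigl(\Vert a\Vert\,\omega_1\bigr)\,\phi_k(\omega)\,d\omega ,
\]
using $aQ^\top\omega=\Vert a\Vert\,u_1Q^\top\omega=\Vert a\Vert\,\omega_1$. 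Next I would factor $\phi_k(\omega)=\phi_1(\omega_1)\prod_{l\ge2}\phi_1(\omega_l)$. Tonelli's theorem applies because the integrand is nonnegative, so I can integrate out $\omega_2,\dots,\omega_k$; each of these integrals equals $1$. This leaves $\int_\R F(\Vert a\Vert\xi)\phi_1(\xi)\,d\xi$, which is the claim.

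An equivalent route is to note that $a\gamma\sim\mathcal{N}(0,\Vert a\Vert^2)$, as a linear image of a Gaussian with variance $a a^\top$, and then apply the law of the unconscious statistician to $F$. The only step needing care is justifying the exchange of integrals, and nonnegativity handles it: the identity holds in $[0,\infty]$, and in fact both sides are finite whenever either one is. I do not expect a real obstacle here.
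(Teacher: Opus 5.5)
Your proposal is correct and follows the paper's proof essentially step for step. The paper also treats $g_\psi(\mathrm{z})=0$ separately, completes $g_\psi(\mathrm{z})/\Vert g_\psi(\mathrm{z})\Vert$ to an orthonormal basis, and applies the orthogonal change of variables $s=W^{\intercal}\gamma$ (your $\omega=Q\gamma$) to write $g_\psi(\mathrm{z})^{\intercal}\gamma=\Vert g_\psi(\mathrm{z})\Vert s_1$; it then factors $\phi_k$ and integrates out the remaining coordinates, where your explicit appeal to Tonelli via nonnegativity adds a justification the paper leaves implicit.
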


This follows from the orthogonal invariance of the Gaussian distribution. For completeness, the proof is provided in Appendix \ref{sec:proof_lemma_int}.

Therefore, defining the scalar-parametrized natural parameter (now with $\xi$),
\begin{equation}\label{eq:vartheta_xz_xi}
\vartheta(x_i,\mathrm{z},\xi; \theta, \psi)
\coloneqq
(A')^{-1}\Bigl(\eta^{-1}\bigl(f_\theta(x_i) + \Vert g_\psi(\mathrm{z})\Vert\xi\bigr)\Bigr),
\end{equation}
and substituting this into the exponential-family form in \eqref{eq:expfam_general} with $T(y_i)=y_i$, we have
\begin{equation}
p(y \mid x,z;\theta,\psi)
=
\left(\prod_{i=1}^{n_j} h(y_i)\right)
\int_{\R}
\exp\left(
\sum_{i=1}^{n_j}
\left[
y_i \vartheta(x_i,\mathrm{z},\xi; \theta, \psi)
-
A(\vartheta(x_i,\mathrm{z},\xi; \theta, \psi))
\right]
\right)
\phi_1(\xi) d\xi .
\end{equation}
Denoting the log-integrand by
\begin{equation}\label{eq:L_def}
L(\xi; y,x,z, \theta, \psi)
\coloneqq
\sum_{i=1}^{n_j}
\left[
y_i \vartheta(x_i,\mathrm{z},\xi; \theta, \psi)
-
A(\vartheta(x_i,\mathrm{z},\xi; \theta, \psi))
\right]
+ \log \phi_1(\xi),
\end{equation}
we get
\begin{equation}\label{eq:log_exact}
\log p(y\mid x,z;\theta,\psi) =
\sum_{i=1}^{n_j}\log h(y_i) +
\log\int_{\R} \exp\left(L(\xi; y,x,z, \theta, \psi)\right) d\xi ,
\end{equation}
which we can approximate with the truncated version
\begin{equation}\label{eq:trunc}
\log p(y\mid x,z;\theta,\psi)
\approx
\sum_{i=1}^{n_j}\log h(y_i) +
\log \int_{\xi_0}^{\xi_1} \exp\left(L(\xi; y,x,z, \theta, \psi)\right) d\xi
\end{equation}
with appropriate values for $\xi_0$ and $\xi_1$.

Finally, define
\begin{equation}
F(\xi; y,x,z,\theta,\psi)
=
\int_{\xi_0}^{\xi} \exp\left(L(t; y,x,z, \theta, \psi)\right) dt,
\end{equation}
and, for $\xi$ such that $F(\xi; y,x,z,\theta,\psi)>0$,
\begin{equation}
H(\xi; y,x,z,\theta,\psi)
=
\log F(\xi; y,x,z,\theta,\psi).
\end{equation}
Then
\begin{equation}
\frac{dF(\xi; y,x,z,\theta,\psi)}{d\xi}
=
\exp\left(L(\xi; y,x,z, \theta, \psi)\right),
\end{equation}
and wherever $F(\xi; y,x,z,\theta,\psi)>0$, the chain rule gives
\begin{equation} \label{eq:ode_H}
\frac{dH(\xi; y,x,z,\theta,\psi)}{d\xi}
=
\exp\left(L(\xi; y,x,z, \theta, \psi) - H(\xi; y,x,z,\theta,\psi)\right).
\end{equation}

For $F(\xi_0; y,x,z,\theta,\psi) = 0$, this corresponds to the initial condition $H(\xi_0; y,x,z,\theta,\psi) = -\infty$.
In practice, we approximate this by starting from a finite baseline constant $H_0$.

Evaluating at $\xi_1$ yields
\begin{equation} 
H(\xi_1; y,x,z,\theta,\psi)
=
\log \int_{\xi_0}^{\xi_1} \exp\left(L(\xi; y,x,z, \theta, \psi)\right) d\xi.
\end{equation}
Combining this with Equation \eqref{eq:trunc} yields the approximation
\begin{equation} \label{eq:obj}
\log p(y \mid x,z;\theta,\psi)
\approx
\sum_{i=1}^{n_j}\log h(y_i)
+
H(\xi_1; y,x,z,\theta, \psi).
\end{equation}

\begin{remark}
    In the random‐intercept model, the original domain is one-dimensional. Therefore, the representation in Equation \ref{eq:log_exact} follows immediately. We illustrate such one–dimensional ODE construction in the special case of a random-intercept logistic model in Appendix \ref{sup:logistic}.
\end{remark}

\subsection{Estimation} \label{sec:estimation}

From a dataset $\D$, we aim to estimate $(\theta,\psi)$ by maximizing the truncated empirical objective induced by the group-level approximation in Equation \eqref{eq:obj}, that is 
\begin{equation}\label{eq:dataset_obj_trunc} \bar{\ell}(\theta,\psi) \coloneqq \frac{1}{m}\sum_{j=1}^{m} \Bigl[ \sum_{i=1}^{n_j}\log h(y_{ij}) + H(\xi_1; \yvec_j,\xmat_j,\zmat_j,\theta,\psi) \Bigr]. \end{equation}  

In practice, $H(\xi_1; \yvec_j,\xmat_j,\zmat_j,\theta,\psi)$ is obtained by numerically solving the ODE in Equation \eqref{eq:ode_H}; we denote the resulting numerical approximation by $\Hnum(\xi_1; \yvec_j,\xmat_j,\zmat_j,\theta,\psi)$ and maximize the corresponding numerical objective 
\begin{equation}\label{eq:dataset_obj_num} \tilde{\ell}(\theta,\psi) \coloneqq 
\frac{1}{m}\sum_{j=1}^{m} \Bigl[ \sum_{i=1}^{n_j}\log h(y_{ij}) + \Hnum(\xi_1; \yvec_j,\xmat_j,\zmat_j,\theta,\psi) \Bigr]. 
\end{equation} 

We maximize Equation \eqref{eq:dataset_obj_num} by stochastic gradient ascent using mini-batches of groups.
Let $\alpha_t=(\theta_t,\psi_t)$ denote the parameters at iteration $t$. At each iteration, we sample a mini-batch of groups $J\subset\{1,\dots,m\}$ uniformly at random, and define the mini-batch objective

\begin{equation}\label{eq:minibatch_obj_groups} \tilde{\ell}^{(t)}(\alpha_t) \coloneqq 
\frac{1}{\vert J\vert} \sum_{j\in J} \Bigl[ \sum_{i=1}^{n_j}\log h(y_{ij}) + \Hnum(\xi_1; \yvec_j,\xmat_j,\zmat_j,\alpha_t) \Bigr]. 
\end{equation} 
We then compute the stochastic gradient \begin{equation}\label{eq:sgd_gradient_main} 
g_t \coloneqq \nabla_{\alpha}\tilde{\ell}^{(t)}(\alpha_t), \end{equation} 
and update \begin{equation}\label{eq:sgd_update_main} \alpha_{t+1} = \alpha_t + \zeta \, g_t, 
\end{equation} where $\zeta>0$ is a step-size. 

We compute gradients by differentiating through a numerical ODE solver that produces $\Hnum(\xi_1;\yvec_j,\xmat_j,\zmat_j,\alpha_t)$, enabling end-to-end training. 
The optimization procedure is summarized in Algorithm \ref{alg:truncated_ode_sgd}. 

Since groups are sampled uniformly and $\tilde{\ell}(\alpha)$ is an average over groups, $g_t$ is a stochastic gradient estimator of $\nabla_{\alpha}\tilde{\ell}(\alpha_t)$. 
Note that if, in addition, we subsample observations within each sampled group when forming the group log-integrand, the resulting update is no longer an unbiased stochastic gradient of Equation \eqref{eq:dataset_obj_num}, as the map $L \mapsto H(\xi_1)$ is nonlinear. Nevertheless, this yields a computationally efficient approximation. Therefore, in Section \ref{sec:theory} we analyze group-level sampling, whereas in our large-scale experiments (see Section \ref{sec:experiments}) we also subsample observations within groups for efficiency.

\begin{algorithm}[t] 
\caption{Stochastic training with truncated ODE} 
\label{alg:truncated_ode_sgd} 
\begin{algorithmic}[1] 
\Require Dataset $\D$, truncation boundaries $\xi_0, \xi_1$, step size $\zeta > 0$, group batch size $b$, number of iterations $T$ 
\State Initialize parameters $\alpha_0=(\theta_0,\psi_0)$ \For{$t=0,\dots,T-1$} 
\State Sample a mini-batch of groups $J_t\subset\{1,\dots,m\}$ uniformly at random with $\vert J_t\vert=b$ \ForAll{$j\in J_t$} 
\State Form the log-integrand $L(\xi)\gets L(\xi;\yvec_j,\xmat_j,\zmat_j,\alpha_t)$ as in Equation \eqref{eq:L_def} 
\State Solve the ODE in Equation \eqref{eq:ode_H}  to obtain $\Hnum(\xi_1;\yvec_j,\xmat_j,\zmat_j,\alpha_t)$ 
\EndFor 
\State Compute $\tilde{\ell}^{(t)}(\alpha_t)$ as in Equation \eqref{eq:minibatch_obj_groups} 
\State Compute $g_t=\nabla_{\alpha}\tilde{\ell}^{(t)}(\alpha_t)$ 
\State Update $\alpha_{t+1}=\alpha_t+\zeta g_t$ 
\EndFor 
\State \textbf{return} $\alpha_T \eqqcolon (\hat{\theta},\hat{\psi})$ \end{algorithmic} \end{algorithm}

\subsection{Prediction} \label{sec:prediction}

Once fitted, NGMM can be used to predict outcomes for observations from previously observed groups $j \in \{1,\dots,m\}$ as well as from new groups $j^* \notin \{1,\dots,m\}$. As in classical mixed-effects models, the predictive procedure differs between these two cases.

In what follows, we restrict attention to the setting in which the random-effects design depends only on the group, so that $z_{ij}$ are identical for all subjects $i$ in group $j$. 

\paragraph{\emph{Observed groups.}}

For a new observation $i^*$ in an observed group $j$, the Bayes predictive distribution for the new response $y_{i^*j}$ is
\begin{equation}\label{eq:bayes_pred}
p(y_{i^*j} \mid x_{i^*j}, z_j, \yvec_j, \xmat_j ; \theta,\psi)
= \int p(y_{i^*j} \mid x_{i^*j}, z_j, \gamma_j; \theta,\psi)\, p(\gamma_j \mid \yvec_j, \xmat_j, \zmat_j;\theta,\psi)\, d\gamma_j,
\end{equation}
where the posterior for the random effect in group $j$ satisfies
\begin{align} \label{eq:posterior_gamma_j}
p(\gamma_{j}\mid\yvec_{j},\xmat_j, \zmat_j;\theta,\psi)
& \propto \phi_{k}(\gamma_{j})\,p(\yvec_{j}\mid \xmat_j, \zmat_j,\gamma_{j};\theta,\psi) \\
& = \phi_{k}(\gamma_{j})\prod_{i=1}^{n_{j}}p(y_{ij}\mid x_{ij},z_{j},\gamma_{j};\theta,\psi).
\end{align}

Using Lemma \ref{lemma:int}, the integrals in Equations \eqref{eq:bayes_pred} and \eqref{eq:posterior_gamma_j} can be reduced to one-dimensional integrals in $\xi$.
They can be evaluated by the same ODE construction as in Equation \eqref{eq:log_exact}, with the corresponding log-integrand obtained by combining the group-level sum over $(y_{ij},x_{ij},z_j)$ with the additional term contributed by $(y_{i^*j},x_{i^*j},z_j)$ in the numerator of Equation \eqref{eq:bayes_pred}.

Therefore, for Gaussian responses, NGMM extends a key property of LMMs: since the
conditional mean is linear in $\gamma_j$, the Bayes predictive mean
reduces to evaluation at the posterior mean
$\E[\gamma_j \mid \yvec_j,\xmat_j,\zmat_j;\theta,\psi]$.

For discrete
outcomes, we compute the predictive probabilities by evaluating the
integral in \eqref{eq:bayes_pred} for all candidate values
$y_{i^*j}$ and normalize.

\paragraph{\emph{New groups.}}
For a new group $j^*$, the posterior for $\gamma_{j^*}$ coincides with its prior, and therefore $\E[\gamma_{j^*}\mid z_{j^*}]=0$.
In LMMs, this renders the best linear unbiased predictor (BLUP) for an observation $(i,j^*)$ from a new group to reduce to the fixed-effects term
\begin{equation}
\hat{y}_{ij^*} = x_{ij^*}\hat{\beta}.
\end{equation}
This property is known as the \emph{generalization property} of linear mixed effects.

Here, NGMM preserves the analogous property.
For Gaussian responses with identity link,
\begin{equation}
\E \left[y_{ij^*} \mid x_{ij^*}, z_{j^*}\right]
=
\int \bigl(f_\theta(x_{ij^*}) + g_\psi(z_{j^*})\gamma_{j^*} \bigr)\, \phi_k(\gamma_{j^*})\, d\gamma_{j^*}
=
f_\theta(x_{ij^*}).
\end{equation}
Binary outcomes are addressed in the following proposition (see Appendix \ref{sec:proof_binary_new_groups} for proof).

\begin{proposition}[Prediction to new groups] \label{prop:binary_new_groups}
Let $y_{ij}\in\{0,1\}$, and let $\eta$ be a link function such that the inverse link $\eta^{-1}$ is strictly increasing and satisfies $1-\eta^{-1}(u)=\eta^{-1}(-u)$   (e.g., Logit and Probit).
For the corresponding NGMM model, the Bayes-optimal classifier for a new group $j^*\notin\{1,\dots,m\}$,
\begin{equation}
\hat y_{ij^*}
\coloneqq
\arg\max_{c\in\{0,1\}}
\, p(y_{ij^*}=c \mid x_{ij^*}, z_{j^*})
\end{equation}
depends only on the fixed-effects term $f_\theta(x_{ij^*})$ and does not depend on $g_\psi(z_{j^*})$.
\end{proposition}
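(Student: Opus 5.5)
For a new group, the posterior over $\gamma_{j^*}$ is just the prior, so by Lemma~\ref{lemma:int} the predictive probability reduces to a one-dimensional Gaussian integral. Write $u=f_\theta(x_{ij^*})$ and $s=\Vert g_\psi(z_{j^*})\Vert\ge 0$. In the Bernoulli model $p(y=1\mid x,\gamma)=\mu=\eta^{-1}(f_\theta(x)+g_\psi(z)\gamma)$, so
\begin{equation*}
P(u,s)\coloneqq p(y_{ij^*}=1\mid x_{ij^*},z_{j^*})=\int_{\R}\eta^{-1}(u+s\xi)\,\phi_1(\xi)\,d\xi .
\end{equation*}
The Bayes classifier predicts $1$ iff $P(u,s)>1/2$, with some fixed tie-breaking. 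It therefore suffices to show that for every $s\ge 0$ the sign of $P(u,s)-1/2$ equals the sign of $u$. Then $\hat y_{ij^*}=\mathbbm{1}\{f_\theta(x_{ij^*})>0\}$, up to a tie convention at $u=0$ that is also independent of $s$, and this depends only on the fixed-effects term.

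First, handle $u=0$ using the symmetry assumption. Substitute $\xi\mapsto-\xi$ and use the symmetry of $\phi_1$:
\begin{equation*}
P(-u,s)=\int\eta^{-1}(-u-s\xi)\phi_1(\xi)\,d\xi=\int\bigl(1-\eta^{-1}(u+s\xi)\bigr)\phi_1(\xi)\,d\xi=1-P(u,s).
\end{equation*}
Setting $u=0$ gives $P(0,s)=1/2$ for all $s$.

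Next, handle $u\neq 0$ using monotonicity. If $u>0$, then for every $\xi$ we have $\eta^{-1}(u+s\xi)>\eta^{-1}(s\xi)$ because $\eta^{-1}$ is strictly increasing. Integrating against the strictly positive density $\phi_1$ gives $P(u,s)>P(0,s)=1/2$. The strict inequality survives integration because the integrand difference is strictly positive everywhere and $\phi_1>0$. The case $u<0$ follows from the antisymmetry $P(-u,s)=1-P(u,s)$ established above.

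The main obstacle is justifying the reduction to the one-dimensional integral and the measurability and integrability involved. Since $0\le\eta^{-1}\le 1$, every integral is finite, so dominated convergence is not even needed. One should also note that for a new group no data update $\gamma_{j^*}$, which is what justifies using the prior $\phi_k$. Finally, the case $s=0$ is consistent with the others: there $P=\eta^{-1}(u)$ and the same sign rule holds.
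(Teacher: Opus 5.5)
Your proof is correct and follows essentially the same route as the paper's. You reduce to the one-dimensional integral $\int\eta^{-1}(u+s\xi)\phi_1(\xi)\,d\xi$, use $1-\eta^{-1}(v)=\eta^{-1}(-v)$ and the symmetry of $\phi_1$ to obtain the value $1/2$ at $u=0$, and use strict monotonicity of $\eta^{-1}$ to show the predictive probability exceeds $1/2$ exactly when $f_\theta(x_{ij^*})>0$; your explicit handling of strictness and of the tie at $u=0$ merely tightens the paper's argument.
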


\paragraph{\emph{Plug-in estimator.}}
Once NGMM is fit by maximizing the empirical
objective in \eqref{eq:dataset_obj_num} , in all cases, we use a plug-in estimator, replacing $(\theta,\psi)$ by their estimates
$(\hat{\theta},\hat{\psi})$.

\section{Analysis of estimation error}\label{sec:theory}

Assume the mixed-effects model in Equations \eqref{eq:our_model} and \eqref{eq:prior} is correctly specified. That is, the data is generated from our model for some true functions $f^{*}$ and $g^*$, and the chosen network classes are sufficiently expressive: there exist parameters $\alpha^* \coloneqq (\theta^{*}, \psi^{*})$ such that $f_{\theta^{*}}=f^{*}$ and $g_{\psi^{*}}=g^{*}$. Additionally, assume that the random effects depend only on group-level covariates, and
the training data $\D = \{\{(y_{ij},x_{ij})\}_{i=1}^{n_j},  z_{j}\}_{j=1}^m$ is obtained by two-stage sampling: first, $m$ groups are drawn i.i.d from the population of groups; then, conditional on each sampled group $j$, $n_j$ observations are drawn i.i.d from that group.

Setting aside for now the numerical errors due to the ODE solver’s finite precision (discussed in Appendix \ref{sup:numerical}), the NGMM estimator may deviate from the targets $f^{*}$ and $g^{*}$ for two reasons:
(i) a \emph{statistical error}, since the training data include only a sample of $m$ groups and a sample of observations within each sampled group; and
(ii) a \emph{truncation error}, stemming from replacing the exact group-level marginal log-likelihood in \eqref{eq:log_exact} by the truncated form in \eqref{eq:trunc}.
Next, we analyze these aspects.

To accommodate parameter symmetries of neural networks, we work with the induced functions rather than with a particular parameter representative. Recall that $\alpha \coloneqq (\theta,\psi)$ and define an equivalence relation
\begin{equation} \label{eq:equivalence}
\alpha \sim \alpha' \quad \Longleftrightarrow \quad f_{\theta}(\mathrm{x}) = f_{\theta'}(\mathrm{x}) \ \text{and}\ g_{\psi}(\mathrm{z}) = g_{\psi'}(\mathrm{z}) \qquad \forall \mathrm{x}, \mathrm{z},
\end{equation}
where $\mathrm{x}$ and $\mathrm{z}$ are single observation-level covariate values.

\subsection{Statistical error}

The truncated integral in Equation \eqref{eq:trunc} is not necessarily taken over a symmetric interval. Nevertheless, whenever the improper integral over $\R$ exists, its value coincides with the symmetric limit (see, e.g., the discussion following Theorem~10.33 in Section~10.13 of \citet{apostol1958mathematical}). Therefore, we analyze symmetric truncations and define
\begin{align}
& I(y;x,z,\theta,\psi)\coloneqq\int_{\R}\exp\bigl(L(\xi;y,x,z,\theta,\psi)\bigr)\,d\xi,\\
& I_{M}(y;x,z,\theta,\psi)\coloneqq\int_{-M}^{M}\exp\bigl(L(\xi;y,x,z,\theta,\psi)\bigr)\,d\xi.
\end{align}
We can now rewrite the exact group-level marginal log-likelihood in Equation \eqref{eq:log_exact}, and its population mean, respectively, as
\begin{equation}
\label{eq:ell_exact_again}
\locloglik{y}{x}{z}{\theta}{\psi}
=
\sum_{i=1}^{n_j}\log h(y_i)
+
\log I(y;x,z,\theta,\psi),
\quad
\FullL(\theta,\psi)
\coloneqq
\E\bigl[\locloglik{y}{x}{z}{\theta}{\psi}\bigr],
\end{equation}
where the expectation is taken with respect to the joint law of the hierarchical model under the two-stage sampling scheme.
Denote the maximizers of the population mean log-likelihood by
\begin{equation}
\FullMaxSet
\coloneqq
\left\{ \alpha^*=(\theta^*,\psi^*):\alpha^*\in\arg\max\FullL(\theta,\psi)\right\}.
\end{equation}
Similarly, define the truncated log-likelihood, and its population mean as
\begin{align}
& \locloglikM{y}{x}{z}{\theta}{\psi}
\coloneqq
\sum_{i=1}^{n_j}\log h(y_i)
+
\log I_{M}(y;x,z,\theta,\psi),\\
&
\TruncL{M}(\theta,\psi)
\coloneqq
\E\bigl[\locloglikM{y}{x}{z}{\theta}{\psi}\bigr],
\label{eq:ell_M_again}
\end{align}
as well as the corresponding maximizers,
\begin{equation}
\TruncMaxSet
\coloneqq
\left\{ \alpha_M=(\theta_M,\psi_M):\alpha_M\in\arg\max\TruncL{M}(\theta,\psi)\right\}.
\end{equation}

Assume that, for each fixed $M$, the population maximizers of the truncated criterion are unique up to the equivalence relation in \eqref{eq:equivalence}, that is, the set of maximizers forms a single equivalence class. Define $f_M \coloneqq f_{\theta_M}$ and $g_M \coloneqq g_{\psi_M}$ for any $\alpha_M\in \TruncMaxSet$.

The NGMM estimator maximizes the empirical analogue
\begin{align}
& \bar{\ell}_{M,j}(\theta,\psi)
\coloneqq
\sum_{i=1}^{n_j}\log h(y_{ij})
+
\log I_M(\yvec_j;\xmat_j,\zmat_j,\theta,\psi),\\
&
\EmpL(\theta,\psi)
=
\frac{1}{m}\sum_{j=1}^{m}\bar{\ell}_{M,j}(\theta,\psi),
\end{align}
and thus targets
\begin{equation}
\EmpMaxSet
\coloneqq
\left\{ \hat{\alpha}=(\hat{\theta},\hat{\psi}):\hat{\alpha}\in\arg\max\EmpL(\theta,\psi)\right\}.
\end{equation}

Under the two-stage sampling scheme above, each group $j$ produces a random cluster $W_j=\{z_{j}\} \cup \{(y_{ij}, x_{ij})\}_{i=1}^{n_j}$, and the empirical criterion $\EmpL(\theta,\psi)$ is an M-estimator based on i.i.d clusters $W_1, \dots, W_m$.
Under standard regularity conditions (continuity, dominated envelope, and uniqueness of the maximizer up to the equivalence relation above) M-estimation theory for clustered data implies that the empirical maximizers $\EmpMaxSet$ converge in probability to $\TruncMaxSet$ as $m \to \infty$, and that any smooth finite-dimensional functional of $\hat{\alpha}\in\EmpMaxSet$ is asymptotically normal at rate $\sqrt{m}$.
See, for example, Section 5.2, Theorems 5.7 and 5.23 in \citet{van2000asymptotic} for general M-estimation, and Section 3.6.2 in \citet{demidenko2013mixed} for the mixed-effects case under the stochastic cluster scheme.

Therefore, statistical error around the truncated targets is handled by standard M-estimation theory.
This leaves the question of \emph{how the truncated targets $f_M \coloneqq f_{\theta_M}$ and $g_M \coloneqq g_{\psi_M}$ relate to the exact targets $(f^{*},g^{*})$.} We address this question in the following section.

\subsection{Truncation error}

Denoting
\begin{equation}
\Delta_M(y;x,z,\theta,\psi)
\coloneqq
\log I(y;x,z,\theta,\psi)-\log I_{M}(y;x,z,\theta,\psi),
\end{equation}
the pointwise deviation of $(f_M,g_M)$ from $(f^{*},g^{*})$ stems from the truncation error $\Delta_M(y;x,z,\theta,\psi)$.
The following theorem (see Appendix \ref{sec:bound_proof} for proof) quantifies this truncation error: the difference between the exact log-integral and its truncated form decays at a Gaussian tail rate as the truncation interval expands.

\begin{theorem}[Pointwise truncation error] \label{thm:bound}
Let $p(u \mid \vartheta)$ be a regular one–parameter exponential family with natural statistic $T(u)=u$.
Fix covariates $(x,z)$ and parameters $(\theta,\psi)$.
Then, for all $y \in \supp(h)^{n_j}$, there exist constants $M_0>0$ and $C>0$ such that for all $M\ge M_0$,
\begin{equation}
0 \le \Delta_M(y)
\leq
C\frac{1}{M}
\exp\Bigl(-\tfrac{1}{2} M^{2}\Bigr),
\end{equation}
and therefore, 
\begin{equation}
\limsup_{M\to\infty}\frac{1}{M^{2}}\log \Delta_M(y)\le -\frac{1}{2}.
\end{equation}
\end{theorem}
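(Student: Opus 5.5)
The argument is to bound the tail mass of the integrand by a constant times the Gaussian tail, and then pass from $I$ to $\log I$. Write $I=I_M+R_M$ with
\begin{equation}
R_M\coloneqq\int_{|\xi|>M}\exp\bigl(L(\xi)\bigr)\,d\xi\ \ge 0 .
\end{equation}
Then $\Delta_M=\log(1+R_M/I_M)\ge 0$ immediately. Using $\log(1+u)\le u$ we also get $\Delta_M\le R_M/I_M$. So it suffices to do two things: bound $I_M$ below by a positive constant, and bound $R_M$ by $C'M^{-1}e^{-M^2/2}$.

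\textbf{Lower bound on $I_M$.} The integrand $\exp(L(\xi))=\prod_i p(y_i\mid\vartheta_i(\xi))\,\phi_1(\xi)$ is continuous in $\xi$. It is strictly positive because $y_i\in\supp(h)$ and the family is regular, so $\vartheta_i(\xi)$ lies in the open natural parameter space. Hence $I_{M}\ge I_{M_0}>0$ for all $M\ge M_0$, once $M_0\ge 1$ is fixed. This step is where we need $y\in\supp(h)^{n_j}$, and it makes the constant depend on $y$, matching the order of quantifiers in the statement.

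\textbf{Bounding the likelihood factor.} The key step is to show that $\prod_i \exp\bigl(y_i\vartheta_i-A(\vartheta_i)\bigr)$ is bounded uniformly in $\xi$. For fixed $y_i$, the map $\vartheta\mapsto y_i\vartheta-A(\vartheta)$ is concave, since $A$ is convex. Its supremum over the natural parameter space equals $A^*(y_i)$, the convex conjugate. Moreover $h(y_i)\exp(y_i\vartheta-A(\vartheta))$ is the probability (or density) of $y_i$. In the discrete case this is at most $1$ whenever $h(y_i)>0$, so each factor is at most $1/h(y_i)$.

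The continuous case is the main obstacle. The conjugate $A^*(y_i)$ can be $+\infty$ when $y_i$ lies on the boundary of the convex support. An example is a Poisson observation $y_i=0$, where $-e^\vartheta\to 0$; that case is still bounded. For densities, I would first try the following argument. Whenever $y_i$ is interior to the convex hull of the support, $y_i\vartheta-A(\vartheta)\to-\infty$ at both ends of the parameter space, so the supremum is finite. Boundary values have $h$-measure-zero issues, and for the standard GLM families (Gaussian, Poisson, Bernoulli, Gamma) one can check directly that the supremum is finite.

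\textbf{Tail integral and conclusion.} With $K(y)\coloneqq\sup_\xi \exp\bigl(\sum_i[y_i\vartheta_i-A(\vartheta_i)]\bigr)<\infty$, we get
\begin{equation}
R_M\le K(y)\int_{|\xi|>M}\phi_1(\xi)\,d\xi=2K(y)\bigl(1-\Phi(M)\bigr)\le \frac{2K(y)}{\sqrt{2\pi}}\,\frac{1}{M}\,e^{-M^2/2}
\end{equation}
by the Mills-ratio inequality. Combining this with the lower bound on $I_M$ gives the claim with
\begin{equation}
C=\frac{2K(y)}{\sqrt{2\pi}\,I_{M_0}} .
\end{equation}
Finally, taking logarithms gives $\log\Delta_M\le \log C-\log M-M^2/2$. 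Dividing by $M^2$ and letting $M\to\infty$ yields the $\limsup$ bound of $-1/2$. If $\Delta_M=0$, then $\log\Delta_M=-\infty$ and the inequality holds trivially.
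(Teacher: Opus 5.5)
Your proof is the paper's proof. The paper also builds the envelope $\exp(L(\xi))\le \widetilde C(y)\,e^{-\xi^2/2}$ from the Fenchel--Young inequality $y_i\vartheta-A(\vartheta)\le A^{*}(y_i)$. It then uses $\int_M^\infty e^{-t^2/2}\,dt\le M^{-1}e^{-M^2/2}$ and $\Delta_M\le R_M/I_M$, plus a lower bound on $I_M$. The paper gets that lower bound by choosing $M_0$ with $R_M\le I/2$, so $I_M\ge I/2$. Your monotonicity bound $I_M\ge I_{M_0}$ works just as well, for any $M_0>0$. Your discrete-case bound $e^{y_i\vartheta-A(\vartheta)}\le 1/h(y_i)$ justifies $A^{*}(y_i)<\infty$ more cleanly than the paper does. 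One small slip: the paper's $\exp(L)$ omits $\prod_i h(y_i)$, so the integrand is positive for every $y$. The hypothesis $y\in\supp(h)^{n_j}$ is needed only in the envelope step, not for positivity.

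The step you call the main obstacle is a genuine gap, and the paper has it too. The paper asserts that $A^{*}$ is finite on its effective domain but never checks that $y_i$ lies in that domain. Your fallback, that Gamma ``can be checked directly,'' is wrong. In fact no argument can close this gap, because the stated bound is false there. Take the exponential family (Gamma with shape $1$) with $h(u)=1$ for $u\ge 0$, $\vartheta<0$, and $A(\vartheta)=-\log(-\vartheta)$. Use the log link, $n_j=1$, $y_1=0$, $f=f_\theta(x_1)$, and $r=\Vert g_\psi(\mathrm{z})\Vert>0$. Then $\vartheta(\xi)=-e^{-(f+r\xi)}$ and $\exp(L(\xi))=e^{-f-r\xi}\phi_1(\xi)=e^{-f+r^2/2}\phi_1(\xi+r)$. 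Hence $\Delta_M=-\log\mathbb{P}(\vert W\vert\le M)\ge \mathbb{P}(W<-M)$ with $W\sim\mathcal{N}(-r,1)$, and this is of order $M^{-1}e^{-M^2/2+rM}$. So no constant $C$ works, although the $\limsup$ conclusion still holds. The repair, for your proof and the paper's, is to assume $\sum_i A^{*}(y_i)<\infty$ explicitly. Your weaker condition $K(y)<\infty$ also suffices. As your own argument shows, this holds at every support point of a discrete family and at every interior point of the convex support of a continuous one. So only a null set of $y$ is excluded.
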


Intuitively, Theorem~\ref{thm:bound} controls the truncation error for fixed parameters and covariates: for any fixed $(x,z)$ and $(\theta,\psi)$, the difference between the exact and truncated log-integrals, $\Delta_M(y;x,z,\theta,\psi)$, decays at a Gaussian tail rate. 

%%%%%%%%%%%%%%%%%%%%%%%%%%%%

In two special cases, Gaussian and Probit, we can characterize the truncation error and its asymptotic decay rate explicitly.
We use these settings to empirically illustrate the truncation error and its asymptotic decay rate in Figure \ref{fig:truncation}.
Closed-form expressions for the Gaussian integrals and the corresponding truncation error are provided in Appendix~\ref{sec:explicitntegrals}.
For the Probit model, Appendix~\ref{sec:explicitntegrals} provides explicit integral expressions and derives the Gaussian tail exponent of the truncation error.

In the Probit model, the generic bound from Theorem~\ref{thm:bound} has the correct worst-case Gaussian tail exponent $1/2$ in $M^{2}$.

\begin{figure} [ht]
    \centering
    \includegraphics[width=\linewidth]{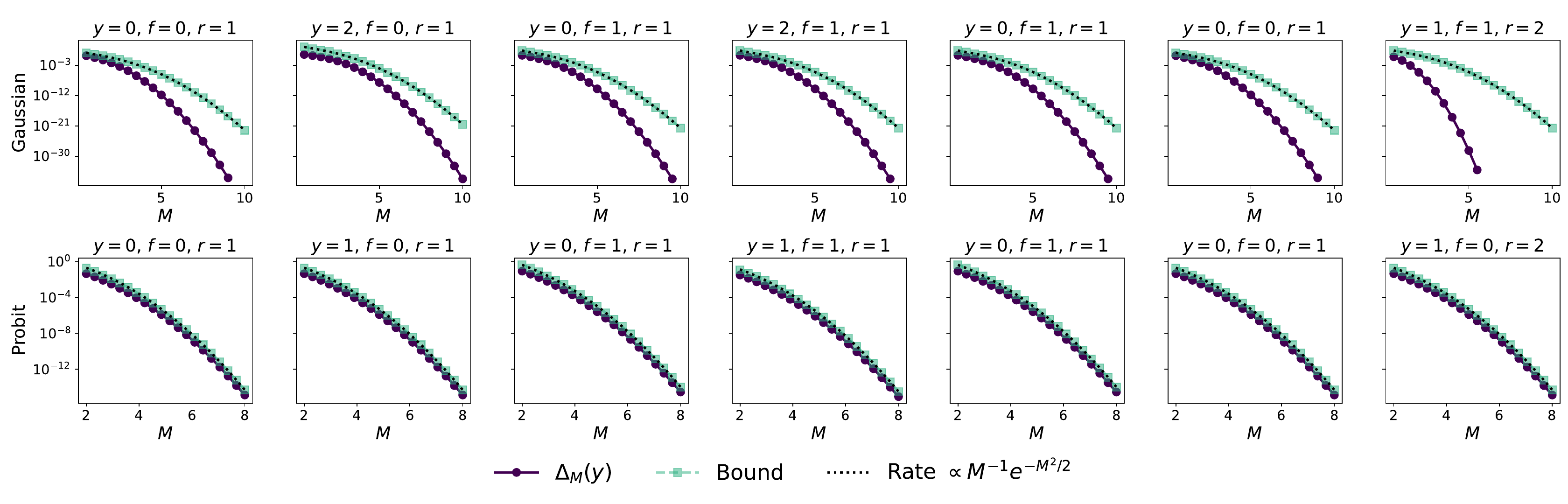}
    \caption{Truncation error and its bound. Gaussian-tail rate is scaled to match the bound at the largest $M$.
    Top (Gaussian): $\Delta_M(y)\asymp M^{-1}e^{-C_G M^2}$ with $C_G=(1+n_j r^2)/2$, while the bound has $M^{-1}e^{-M^2/2}$ rate.
    Bottom (Probit): $\Delta_M(y)$ has Gaussian tail exponent at least $1/2$ in $M^2$ (worst-case $1/2$), matching the generic bound in the worst-case sense.}
    \label{fig:truncation}
\end{figure}

Next, we use the pointwise bound in Theorem \ref{thm:bound} to derive a population-level bound, uniform over parameters $\alpha$, showing that $\TruncL{M}$ converges uniformly to $\FullL$.

Recall that for each fixed $\alpha$ and $(y,x,z)$, Theorem~\ref{thm:bound} yields constants
$M_0=M_0(y;x,z,\alpha) > 0$ and $C=C(y;x,z,\alpha) > 0$ such that
\begin{equation}
\label{eq:pointwise_bound_again_alpha}
0 \le \Delta_M(y;x,z,\alpha)
\le
\frac{C}{M}
\exp\Bigl(-\tfrac{1}{2} M^{2}\Bigr)
\quad\text{for all } M \ge M_0.
\end{equation}

Define population truncation constants for each $\alpha$ as
\begin{equation}
M_0^{\mathrm{p}}(\alpha)
\coloneqq
\sup_{(y,x,z)} M_0(y;x,z,\alpha),
\qquad
\overline C(\alpha)
\coloneqq
\sup_{(y,x,z)} C(y;x,z,\alpha),
\end{equation}
and assume that $M_0^{\mathrm{p}}(\alpha) < \infty$ and $\overline C(\alpha) < \infty$.
If $\ParamSet$ is a compact parameter set, then
\begin{equation}
M_0^{\ParamSet}
\coloneqq
\sup_{\alpha \in \ParamSet} M_0^{\mathrm{p}}(\alpha) < \infty,
\qquad
\overline C^{\ParamSet}
\coloneqq
\sup_{\alpha \in \ParamSet} \overline C(\alpha) < \infty.
\end{equation}
Under these assumptions, for every $\alpha \in \ParamSet$ and every $M \ge M_0^{\ParamSet}$, the bound in Equation \eqref{eq:pointwise_bound_again_alpha} implies
\begin{equation}
0 \le \Delta_M(y;x,z,\alpha)
\le
\frac{\overline C^{\ParamSet}}{M}
\exp\Bigl(-\tfrac{1}{2} M^{2}\Bigr)
\quad\text{almost surely}.
\end{equation}
Taking expectations over the joint law of the hierarchical model yields
\begin{align}
0
\le
\FullL(\alpha) - \TruncL{M}(\alpha)
& =
\E\bigl[\Delta_M(y;x,z,\alpha)\bigr]
\le
\frac{\overline C^{\ParamSet}}{M}
\exp\Bigl(-\tfrac{1}{2} M^{2}\Bigr)
\end{align}
for all $\alpha \in \ParamSet$.
In particular,
\begin{equation}
\sup_{\alpha \in \ParamSet}
\bigl\vert \FullL(\alpha) - \TruncL{M}(\alpha)\bigr\vert
\to 0
\quad \text{as } M \to \infty,
\end{equation}
so $\TruncL{M}$ converges uniformly to $\FullL$ on $\ParamSet$.
By the Argmax Theorem (see, e.g., \citealp[Chapter~5]{Vaart_1998}),
any sequence of maximizers
$\hat\alpha_{M} \in \arg\max_{\alpha \in \ParamSet} \TruncL{M}(\alpha)$ is
asymptotically maximizing for the full model,
\begin{equation}
\FullL(\hat\alpha_{M})
\to
\sup_{\alpha \in \ParamSet} \FullL(\alpha),
\end{equation}
and, whenever $\FullL$ admits a unique maximizer
$\alpha^{*} \in \ParamSet$, we have $\hat\alpha_{M} \to \alpha^{*}$.

\begin{remark}
In practice, for neural networks, these uniform bounds are approximately satisfied if the covariates $(x,z)$ take values in a bounded region, the parameter space is effectively restricted (for example by weight regularization), and the network maps $(f_{\theta},g_{\psi})$ are continuous.
In such settings, the constants $C(y;x,z,\theta,\psi)$ vary smoothly over $(y,x,z)$ and are uniformly bounded on the support of $(y,x,z)$, and the corresponding thresholds $M_0(y;x,z,\theta,\psi)$ can be chosen uniformly.
\end{remark}

\section{Simulations} \label{sec:synthetic}

We begin by evaluating NGMM on grouped synthetic data with a correctly specified nonlinear data-generating process.  
We set the number of groups to $m = 50$ and the number of observations per group to $n_j = 40$ for all $j \in \{1,\dots,m\}$. For each observation $(i,j)$, we independently draw  covariates $x^{(1)}_{ij}, x^{(2)}_{ij} \sim \mathcal{N}(0,1)$
and a group-specific random effect $\gamma_j \sim \mathcal{N}(0,\tau^2)$ with $\tau \in \{0.1, 0.3, 0.5\}$.
We define the fixed-effect and random-effect functions as
\begin{equation}
f^{*}\left(x^{(1)},x^{(2)}\right)
=
\sin\left(x^{(1)} + x^{(2)}\right)
+
x^{(1)} x^{(2)}, \qquad g^{*}\left(x^{(1)}\right) = x^{(1)}.
\end{equation}
We consider two random-effects structures:
a \emph{random-intercept} setting
\begin{equation}
\eta_{ij}
=
f^{*}\left(x^{(1)}_{ij}, x^{(2)}_{ij}\right)
+
\gamma_j,
\end{equation}
and a 
\emph{random-slope}
\begin{equation}
\eta_{ij}
=
f^{*}\left(x^{(1)}_{ij}, x^{(2)}_{ij}\right)
+
g^{*}\left(x^{(1)}_{ij}\right)\gamma_j.
\end{equation}

Given $\eta_{ij}$, we generate responses from three models: (i) Gaussian $y_{ij} = \eta_{ij} + \varepsilon_{ij}$ with $\varepsilon_{ij} \sim \mathcal{N}(0,1)$, (ii) Logistic $y_{ij} \sim \mathrm{Bernoulli}(p_{ij})$ with $\log\frac{p_{ij}}{1 - p_{ij}} = \eta_{ij}$, and (iii) Poisson $y_{ij} \sim \mathrm{Poisson}(\lambda_{ij})$ with $\log \lambda_{ij} = \eta_{ij}$.

For each combination of outcome family (Gaussian, Logistic, Poisson), random-effects structure (intercept or slope), and variance level $\tau\in\{0.1,0.3,0.5\}$, we evaluate on two test sets: (i) new observations from \emph{previously observed groups}, and (ii) observations from \emph{new groups}.

We compare our NGMM to standard GLMM baselines.\footnote{We use \texttt{MixedLM} from \texttt{statsmodels}. For logistic and Poisson outcomes we fit \texttt{glmer} models from the \texttt{lme4} R package with logit and log links, respectively.} For our model, we set $f$ and $g$ to be multilayer perceptrons (MLPs) with three hidden layers of sizes $16$, $8$, and $4$, and ReLU activations.

We evaluate Gaussian models using root mean-squared error (RMSE), logistic models using Bernoulli log-loss (negative marginal likelihood), and Poisson models using Poisson deviance. 
Across all settings, we report the relative change in each metric $V$ as $\frac{V_{\text{ours}}-V_{\text{GLMM}}}{V_{\text{GLMM}}}$. Additional details are provided in Appendix \ref{sup:synthetic}.

\paragraph{\emph{Results.}}
Results for the 33 resulting experiments over 10 repetitions are reported in Figure \ref{fig:simulations_nonlinear}. In the Poisson random-slope setting for previously observed groups, the GLMM baseline failed to converge and was therefore omitted from the analysis.
Since the true relationship is nonlinear, in all cases our model achieves lower errors.

\begin{figure} [ht]
    \centering
    \includegraphics[width=\linewidth]{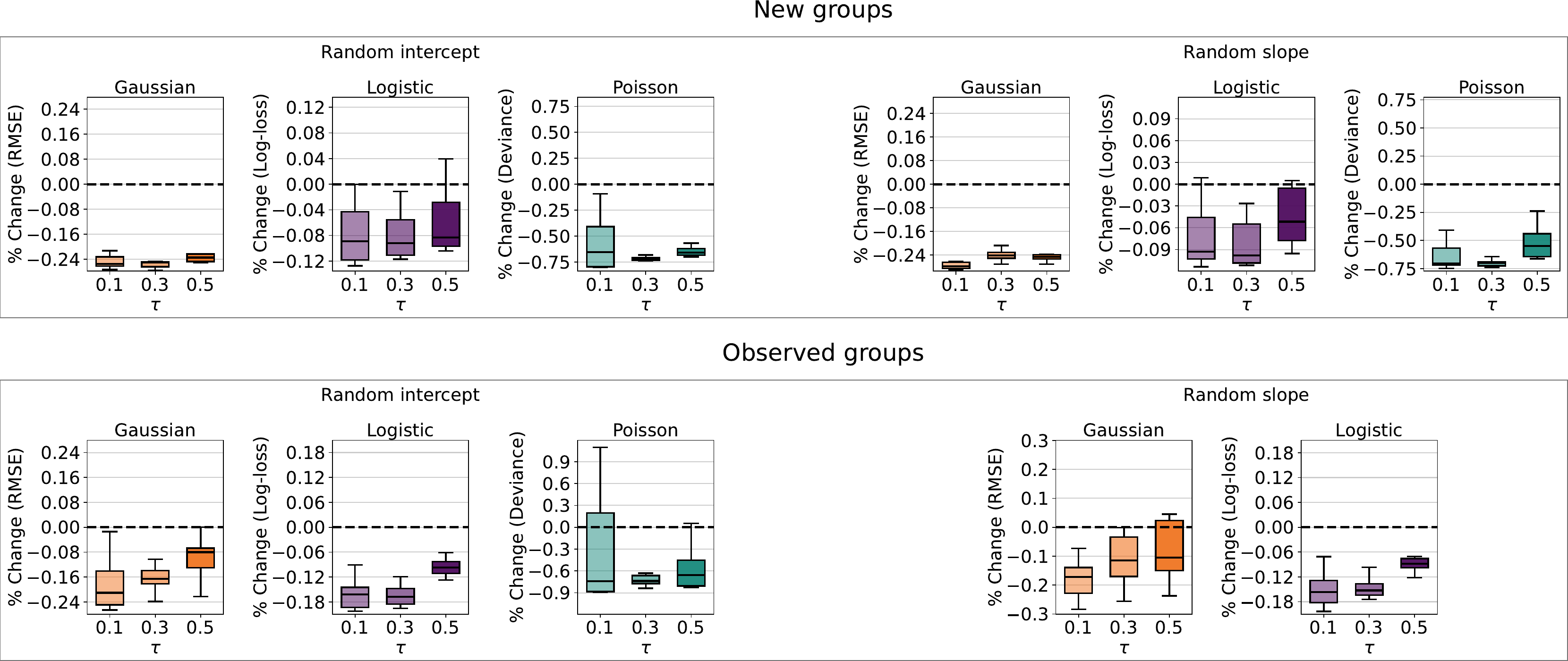}
    \caption{Simulation results. Values below zero indicate lower error for NGMM compared to GLMM.}
    \label{fig:simulations_nonlinear}
\end{figure}

Analogous results for linear data-generating processes are reported in Appendix \ref{sup:simualations}. In those experiments, we constrain the networks to be linear, thereby matching the functional form of the GLMM baselines. Under correct linear specification, GLMMs exhibit a slight advantage in some settings due to differences in the marginal-likelihood approximation. However, in 10 of the 33 cases NGMM outperforms GLMMs, and in the remaining cases the performance gaps are substantially smaller than the gains observed under nonlinear specifications.

\section{Experiments} \label{sec:experiments}
\subsection{Gaussian and Poisson neural mixed-effects models on Airbnb} \label{sec:airbnb}

In two special cases, Gaussian and Gamma-Poisson models, prior work exploited closed-form marginal likelihoods. To compare NGMM to these models,
we evaluate on an Airbnb listings dataset with suitable targets for both settings.

We treat host identity as the grouping variable and consider two prediction tasks: (i) listing price (treated with Gaussian likelihood) and (ii) bedroom count (treated with Poisson likelihood). In each repetition of the experiment, we subsample 500 hosts and evaluate all methods on (a) observations from hosts seen during training (using both fixed and random effects) and (b) observations from previously unseen hosts (using only fixed effects).

Across all settings, the fixed-effects component $f_\theta$ is an MLP with one hidden layer of size $32$, and the random intercept is a linear map from a host indicator to a scalar.

For the Gaussian task, we compare to LMMNN \citep{Simchoni_Rosset1, Simchoni_Rosset2}, which is specialized to Gaussian responses and optimizes the exact marginal likelihood. 
For the Poisson task, we compare to the Poisson–Gamma neural mixed-effects model (PGNN) of \citet{lee2023subject}.
Additionally, following \citet{Simchoni_Rosset1, Simchoni_Rosset2} we compare to two non-hierarchical baselines that encode group structure via one-hot group indicators (OHE) or learned group embeddings (Embed). Additional details are provided in Appendix \ref{sup:airbnb} and results over 10 repetitions are reported in Figure \ref{fig:airbnb}.

\begin{figure} [ht]
    \centering
    \includegraphics[width=\linewidth]{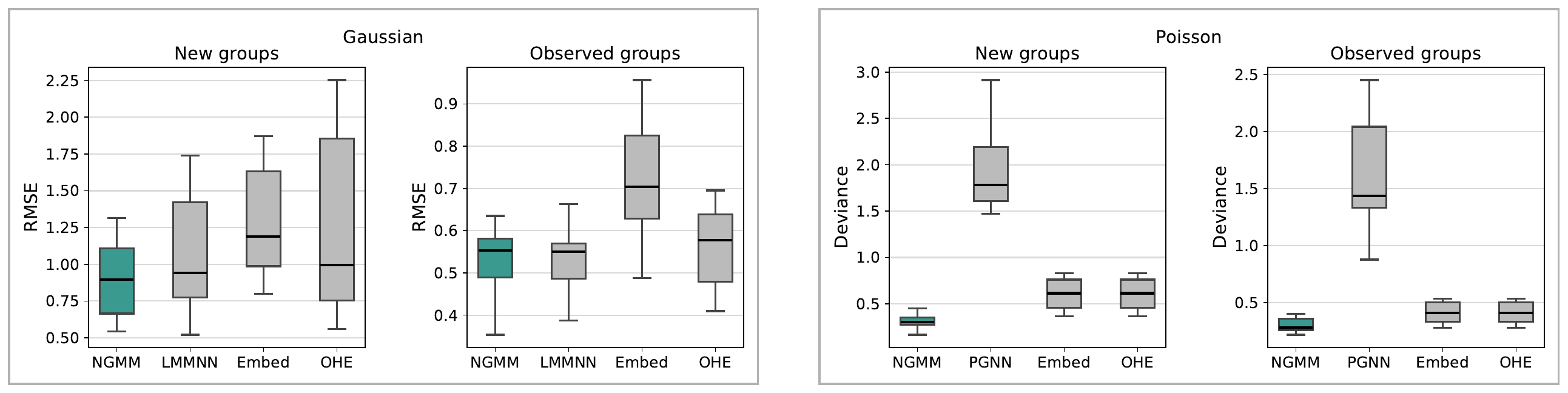}
    \caption{Airbnb experimental results. For the Gaussian response, NGMM (this paper) achieves comparable results to LMMNN \citep{Simchoni_Rosset1}. For the Poisson response, NGMM outperforms all other methods including PGNN \citep{lee2023subject}.}
    \label{fig:airbnb}
\end{figure}

\subsection{Logistic neural mixed-effects model for RxRx1 microscopy images} \label{sec:rxrx}

Prior neural mixed-effects models have focused on conjugate settings, treating Gaussian and Poisson responses, but not binary responses. To study NGMM with  binary responses, we analyze the RxRx1 dataset \citep{taylor2019rxrx1}, showing that modeling random effects remains beneficial for high-dimensional imaging data.  

The RxRx1 dataset consists of fluorescence microscopy images collected under many siRNA perturbations across multiple experimental batches. Our binary classification problem is to distinguish untreated control wells from wells treated with a single active siRNA. In each repetition, we choose one siRNA with sufficient support as the positive class and set the negative class to control. 

We treat imaging site as the grouping variable, capturing systematic variation across microscope fields-of-view. We consider two evaluation regimes. In the \emph{old groups} setting, we train on the RxRx1 training split and evaluate on an in-distribution split that contains only sites observed during training, using both fixed and random effects at test time. In the \emph{new groups} setting, we train on the same training split but evaluate on the WILDS out-of-distribution split, which contains only previously unseen sites; here predictions rely solely on the fixed-effects.

For all methods, we set the fixed-effects component $f_\theta$ to a small convolutional network with two $3\times 3$ convolutional layers (16 and 32 channels) followed by  average pooling and two fully connected layers of width $64$. NGMM augments this network with a Gaussian random intercept per site. 

As before, we compare against non-hierarchical baselines that encode the site as one-hot indicators (OHE) or learned site embeddings (Embed). Additional details are in Appendix \ref{sup:rxrx} and results over 10 repetitions are reported in Figure \ref{fig:rxrx}.

\begin{figure} [ht]
    \centering
    \includegraphics[width=\linewidth]{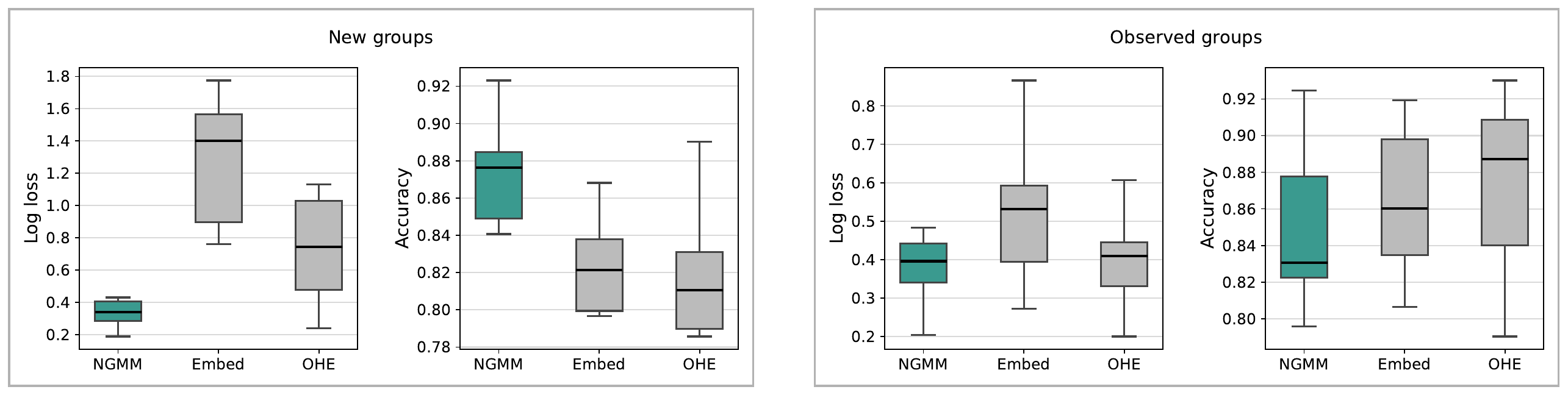}
    \caption{RxRx1 experimental results. Both for new groups and observed groups NGMM approximates the true marginal log-likelihood better than existing methods, achieving lower log-loss. For new groups it achieves higher prediction accuracy than all methods, and higher than BBVI for observed groups. On observed groups, the OHE and Embed baselines obtain higher accuracy but worse log-loss than NGMM, indicating overfitting of their predictive probabilities to the training sites.}
    \label{fig:rxrx}
\end{figure}

\subsection{Latent mixed-effects neural model for PISA student proficiencies} \label{sec:pisa}

Here we show that NGMM extends beyond observed-response settings to latent-variable models. To demonstrate this, we revisit the 2012 Programme for International Student Assessment (PISA) student proficiency analysis \citep{pisa}, where student proficiency is treated as a latent variable inferred from observed test responses and background characteristics. In the original analysis, this latent proficiency is modeled using a linear fixed-effects model.

We extend this analysis by replacing the fixed-effects model with a mixed-effects formulation that accounts for both student-level covariates and school-level variation. In addition, we replace the linear components of the model with neural networks, allowing for flexible nonlinear relationships.

\paragraph{\emph{Original PISA analysis.}}
Let students be indexed by $i$, schools by $j$, and items (test questions) by $k$.
Each item $k$ is assigned to one of three domains $t(k)$: reading, mathematics, and science, and has $C_k$ ordered score categories $\{0,\dots,C_k-1\}$.
Let $y_{ijk} \in \{0,\dots,C_k-1\}$ denote the ordinal response of student $i$ in school $j$ to item $k$.
Each student $i$ in school $j$ is modeled to have a latent proficiency vector $\varphi_{ij} \in \mathbb{R}^{3}$, where $\varphi_{ij,t(k)}$ is the proficiency in the domain of item $k$.
Each item $k$ is assigned a vector of step difficulties
$\delta_k = (\delta_{k,1},\dots,\delta_{k,C_k-1}) \in \mathbb{R}^{C_k-1}$.

In the PISA analysis, the probability of response $c$ is modeled with a Rasch partial credit measurement model,
\begin{equation}\label{eq:pisa_pcm}
\mathbb{P}(y_{ijk} = c \mid \varphi_{ij}, \delta_k)
=
\frac{
\exp\left(
\sum_{m=1}^{c}
\left(
\varphi_{ij,t(k)} - \delta_{k,m}
\right)
\right)
}{
\sum_{c'=0}^{C_k-1}
\exp\left(
\sum_{m=1}^{c'}
\left(
\varphi_{ij,t(k)} - \delta_{k,m}
\right)
\right)
},
\qquad c = 0,\dots,C_k-1,
\end{equation}
where item difficulties $\delta_k$ are treated as fixed and known (in our experiments we use the official PISA values).

Let $x_{ij}$ denote the high-dimensional background covariates for student $i$ in school $j$. These are assembled from student, parent, and school questionnaires together with core design variables.
In the official PISA analysis, these covariates are transformed with principal component analysis (PCA), denoted as $\tilde x_{ij}$.
The latent proficiency vector $\varphi_{ij}$ is then modeled by a multivariate normal regression
\begin{equation}\label{eq:pisa_background}
\varphi_{ij} \mid \tilde x_{ij} \sim \mathcal{N}\bigl(W \tilde x_{ij}, \Sigma_{\varphi}\bigr),
\end{equation}
where $W$ is a coefficient matrix and $\Sigma_{\varphi}$ is a full covariance matrix across domains.

\paragraph{\emph{Neural mixed-effects extension.}}
Our extension keeps the Rasch partial credit measurement model \eqref{eq:pisa_pcm} but modifies the background model \eqref{eq:pisa_background} in two ways: (i) we replace the linear predictor on PCA-transformed covariates $\tilde x_{ij}$ by a neural network on the raw background covariates $x_{ij}$, and (ii) we separate school-level covariates $z_j$ and introduce school-level random effects.
These yield the neural mixed-effects model
\begin{equation}\label{eq:pisa_neural_phi}
\varphi_{ij} = f_{\theta}(x_{ij}) + \gamma_j \odot g_{\psi}(z_j) +  \varepsilon_{ij}.
\end{equation}
where $\gamma_j \sim \mathcal{N}\bigl(0, I_3\bigr)$ and $\varepsilon_{ij} \sim \mathcal{N}\bigl(0, \sigma^2 I_3\bigr)$.
Here, $\odot$ denotes element-wise multiplication, $f_{\theta}(x_{ij}) \in \mathbb{R}^{3}$ maps student covariates to a domain-specific proficiency, $\gamma_j \odot g_{\psi}(z_j) \in \mathbb{R}^{3}$ produces domain-specific school-level effects, and
$\sigma > 0$ is a learned scalar.
Both $f_{\theta}$ and $g_{\psi}$ are neural networks.
The model is illustrated in Figure \ref{fig:pisa_illustration}. 

In this model, student proficiencies follow
\begin{equation}\label{eq:student_conditional}
\varphi_{ij} \mid \gamma_j
\sim
\mathcal{N}\Bigl(
f_{\theta}(x_{ij}) + \gamma_j \odot g_{\psi}(z_j),
\ \sigma^{2} I_{3}
\Bigr),
\end{equation}
and the resulting marginal likelihood is 
\begin{equation}\label{eq:marginal_likelihood} \small
 p(y \mid \theta,\psi,\delta) = \prod_{j=1}^{J}
\int
\left[
\prod_{i=1}^{n_j}
\int
\left(
\prod_{k \in Q(i,j)}
p\bigl(y_{ijk} \mid \varphi_{ij}, \delta_k\bigr)
\right)
p\bigl(\varphi_{ij} \mid \gamma_j, x_{ij}; \theta,\psi\bigr)
\;  d\varphi_{ij}
\right]
p(\gamma_j)
\; d\gamma_j,
\end{equation}
where $Q(i,j) \subseteq \{1,\dots,K\}$ is the set of items answered by student $i$ in school $j$.
We optimize this likelihood using the quadrature procedure described in Appendix \ref{sup:multi}.

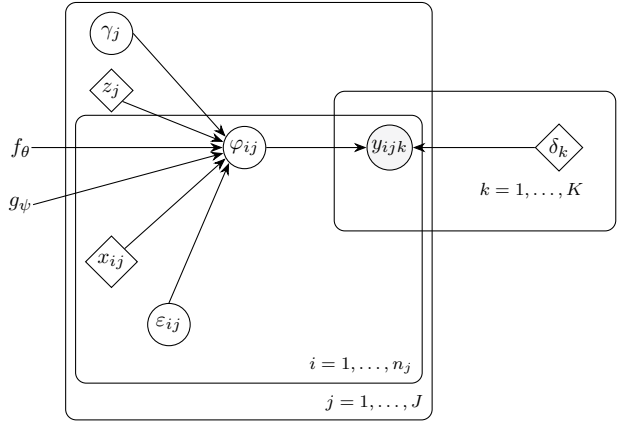
\begin{figure}
\centering
\begin{tikzpicture}[scale=0.8, transform shape]
\tikzset{>=Stealth}

\def\ports{0.95}    
\def\Lx{-2.2}       
\def\Fx{-3.7}       
\def\Dx{5.2}       
\def\Yx{2.4}       
\def\r{0.36}

\node[latent] (phi) at (0,0) {$\varphi_{ij}$};

\node[obs] (y) at (\Yx,0) {$y_{ijk}$};

\node[det] (delta) at (\Dx,0) {$\delta_k$};

\node[latent] (gammj) at (\Lx,  2*\ports) {$\gamma_j$};
\node[det]    (z)     at (\Lx,  1*\ports) {$z_j$};
\node[det]    (x)     at (\Lx, -2*\ports) {$x_{ij}$};
\node[latent] (eps) at (\Lx+0.95, -3.1*\ports) {$\varepsilon_{ij}$};

\node[const]  (theta) at (\Fx,  0*\ports) {$f_\theta$};
\node[const]  (psi)   at (\Fx, -1*\ports) {$g_\psi$};

\coordinate (p1) at ($(phi.center)+({\r*cos(150)},{\r*sin(150)})$);
\coordinate (p2) at ($(phi.center)+({\r*cos(165)},{\r*sin(165)})$);
\coordinate (p3) at ($(phi.center)+({\r*cos(180)},{\r*sin(180)})$);
\coordinate (p4) at ($(phi.center)+({\r*cos(195)},{\r*sin(195)})$);
\coordinate (p5) at ($(phi.center)+({\r*cos(210)},{\r*sin(210)})$);

\coordinate (p6) at ($(phi.center)+({\r*cos(225)},{\r*sin(225)})$);

\draw[->] (gammj.east) -- (p1);
\draw[->] (z.south east)     -- (p2);
\draw[->] (theta.east) -- (p3);   
\draw[->] (psi.east)   -- (p4);
\draw[->] (x.north east)     -- (p5);

\draw[->] (eps.north) -- (p6);

\draw[->] (phi.east) -- (y.west);
\draw[->] (delta.west) -- (y.east);

\plate[inner sep=12pt] {plateK} {(y)(delta)} {$k=1,\dots,K$};

\plate{plateI} {(x)(eps)(phi)(y)} {$i=1,\dots,n_j$};

\plate {plateJ} {(z)(gammj)(plateI)} {$j=1,\dots,J$};

\end{tikzpicture}
\caption{Illustration of the extended neural mixed-effects PISA model.}
\label{fig:pisa_illustration}
\end{figure}

\paragraph{\emph{Experimental setup.}}

We compare four background models coupled with the same partial credit measurement model in Equation \eqref{eq:pisa_pcm}: fixed-effects linear model (FE–LIN), random-effects linear model (ME–LIN), fixed-effects neural model (FE–NN), and the neural mixed-effects model (ME–NN). 
All models are fit by maximizing the marginal log-likelihood \eqref{eq:marginal_likelihood} for each country separately.
For each  model we evaluate predictive performance on (i) held-out students from observed schools and (ii) students in new schools. We report classification accuracy and mean negative log marginal likelihood for each proficiency domain.
Additional details are included in Appendix \ref{sup:pisa}.

\paragraph{\emph{Results.}}

We focus here on countries that were indicated in the original analysis as countries with the widest dispersion in scores of mathematical proficiency.Out of these, we report results for the 6 countries with the largest number of students.
Results for mathematical proficiency over 10 repetitions are reported in Figure \ref{fig:pisa_math}, and 
analogous results for proficiency in science and reading are reported in Appendix \ref{sup:pisa_results}.

\begin{figure} [ht]
    \centering
    \includegraphics[width=\linewidth]{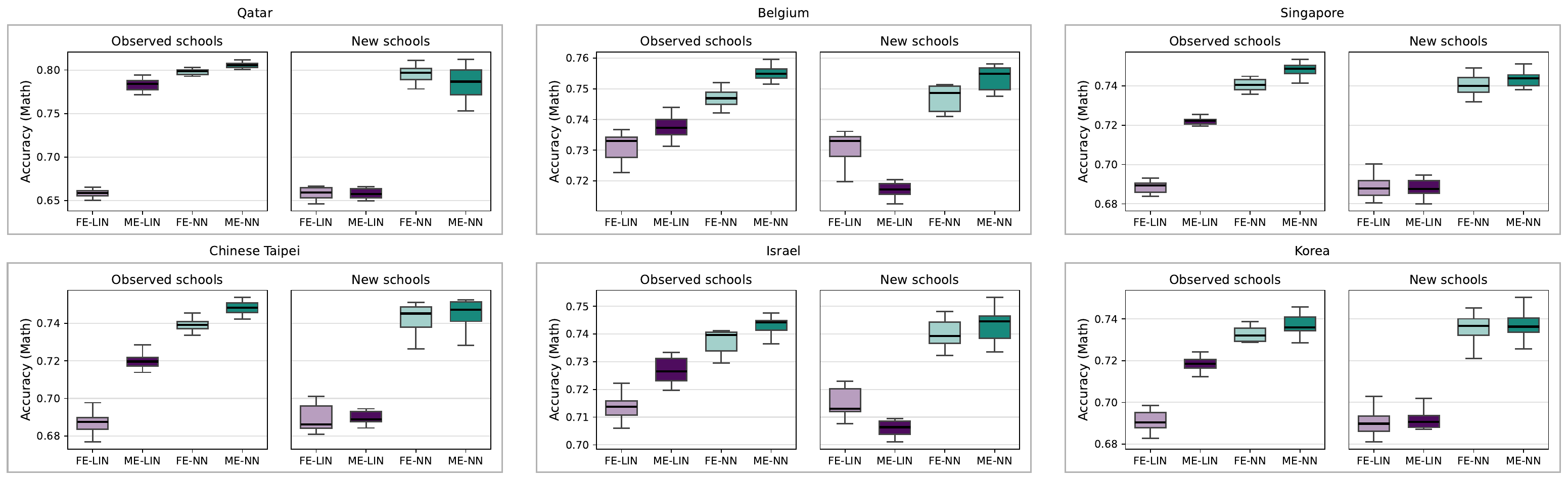}
    \caption{PISA math proficiency experimental results. In all six countries, neural models outperform their linear counterparts. For previously observed schools, incorporating random effects substantially improves performance in both linear and neural models. For new schools, the incremental benefit of adding random effects to the neural model is modest.}
    \label{fig:pisa_math}
\end{figure}

\section{Discussion} \label{sec:discuss}
In this work we introduced the neural generalized mixed-effects model (NGMM), a flexible extension of generalized linear mixed-effects models that replaces the linear fixed and random components with neural networks.

By reparameterizing the integral over random effects and expressing the marginal likelihood as an ordinary differential equation, we obtained a differentiable approximation of the marginal log-likelihood that is compatible with automatic differentiation and stochastic optimization. In the general case where the random-effects covariates vary across observations within a group, the one-dimensional representation is no longer available. We addressed this case in Appendix \ref{sup:multi}, where we introduce an additional approximation by replacing the resulting multivariate integral with a differentiable quadrature rule.
In both cases, the gap between the exact and approximate objectives decays at a Gaussian-tail rate.

This tailored approximation of the marginal likelihood can be substantially more accurate than generic methods such as black-box variational inference \citep{ranganath2014black}. We show this in Appendix \ref{sec:bbvi}.

For Gaussian and symmetric binary links, we showed that NGMM preserves the classical “new group” generalization property: the Bayes-optimal predictor for a previously unseen group depends only on the fixed-effects component. 

Our empirical studies show that these methodological contributions translate into practical gains across a range of settings. NGMM provides a competitive alternative to previous neural mixed-effects models that rely on closed-form solutions in conjugate settings, and extends seamlessly to non-conjugate cases.

Our experiments show that explicitly modeling random effects remains valuable even in high-dimensional neural architectures, particularly when generalization to new groups is of interest. Furthermore, the PISA case study illustrates how NGMM can be used in latent models.

Our results assume isotropic Gaussian random effects. This assumption permits a one-dimensional reduction of the random-effects integral when the random effects depend on group-level covariates, and simplifies the truncation analysis. Extending NGMM to more general random-effects distributions is a promising direction for future work. 

More broadly, NGMM links classical mixed-effects modeling with modern deep learning. It provides a general framework for incorporating group structure into neural models, thereby facilitating generalization from observed groups to new ones.
We expect this combination to be useful in many applied settings where data exhibits nonlinear relationships and is hierarchically organized. 

\section*{Acknowledgments}
This work is supported by NSF IIS-2127869, NSF DMS-2311108, ONR N000142412243, and the Simons Foundation.
YS is supported by a Founder’s Postdoctoral Fellowship, Department of Statistics, Columbia University.

\clearpage
\bibliographystyle{abbrvnat}
\bibliography{reference}

@article{zhou2012genome,
  title={Genome-wide efficient mixed-model analysis for association studies},
  author={Zhou, Xiang and Stephens, Matthew},
  journal={Nature Genetics},
  volume={44},
  number={7},
  pages={821--824},
  year={2012},
  publisher={Nature Publishing Group}
}

@article{listgarten2010correction,
  title={Correction for hidden confounders in the genetic analysis of gene expression},
  author={Listgarten, Jennifer and Kadie, Carl and Schadt, Eric E and Heckerman, David},
  journal={Proceedings of the National Academy of Sciences},
  volume={107},
  number={38},
  pages={16465--16470},
  year={2010},
  publisher={National Academy of Sciences}
}

@article{zhang2010mixed,
  title={Mixed linear model approach adapted for genome-wide association studies},
  author={Zhang, Zhiwu and Ersoz, Elhan and Lai, Chao-Qiang and Todhunter, Rory J and Tiwari, Hemant K and Gore, Michael A and Bradbury, Peter J and Yu, Jianming and Arnett, Donna K and Ordovas, Jose M and others},
  journal={Nature Genetics},
  volume={42},
  number={4},
  pages={355--360},
  year={2010},
  publisher={Nature Publishing Group}
}

@article{bernal2013statistical,
  title={Statistical analysis of longitudinal neuroimage data with linear mixed effects models},
  author={Bernal-Rusiel, Jorge L and Greve, Douglas N and Reuter, Martin and Fischl, Bruce and Sabuncu, Mert R and Alzheimer's Disease Neuroimaging Initiative and others},
  journal={NeuroImage},
  volume={66},
  pages={249--260},
  year={2013},
  publisher={Elsevier}
}

@article{visscher2003mixed,
  title={Mixed blocked/event-related designs separate transient and sustained activity in fMRI},
  author={Visscher, Kristina M and Miezin, Francis M and Kelly, James E and Buckner, Randy L and Donaldson, David I and McAvoy, Mark P and Bhalodia, Vidya M and Petersen, Steven E},
  journal={NeuroImage},
  volume={19},
  number={4},
  pages={1694--1708},
  year={2003},
  publisher={Elsevier}
}

@article{payne2015revisiting,
  title={Revisiting the incremental effects of context on word processing: Evidence from single-word event-related brain potentials},
  author={Payne, Brennan R and Lee, Chia-Lin and Federmeier, Kara D},
  journal={Psychophysiology},
  volume={52},
  number={11},
  pages={1456--1469},
  year={2015},
  publisher={Wiley Online Library}
}

@article{baayen2008mixed,
  title={Mixed-effects modeling with crossed random effects for subjects and items},
  author={Baayen, R Harald and Davidson, Douglas J and Bates, Douglas M},
  journal={Journal of Memory and Language},
  volume={59},
  number={4},
  pages={390--412},
  year={2008},
  publisher={Elsevier}
}

@article{gelman2007analysis,
  title={An analysis of the New York City police department's “stop-and-frisk” policy in the context of claims of racial bias},
  author={Gelman, Andrew and Fagan, Jeffrey and Kiss, Alex},
  journal={Journal of the American Statistical Association},
  volume={102},
  number={479},
  pages={813--823},
  year={2007},
  publisher={Taylor \& Francis}
}

@article{judd2012treating,
  title={Treating stimuli as a random factor in social psychology: a new and comprehensive solution to a pervasive but largely ignored problem.},
  author={Judd, Charles M and Westfall, Jacob and Kenny, David A},
  journal={Journal of Personality and Social Psychology},
  volume={103},
  number={1},
  pages={54},
  year={2012},
  publisher={American Psychological Association}
}

@article{van2023lecanemab,
  title={Lecanemab in early Alzheimer’s disease},
  author={Van Dyck, Christopher H and Swanson, Chad J and Aisen, Paul and Bateman, Randall J and Chen, Christopher and Gee, Michelle and Kanekiyo, Michio and Li, David and Reyderman, Larisa and Cohen, Sharon and others},
  journal={New England Journal of Medicine},
  volume={388},
  number={1},
  pages={9--21},
  year={2023},
  publisher={Massachusetts Medical Society}
}

@article{raudenbush1999synthesizing,
  title={Synthesizing results from the trial state assessment},
  author={Raudenbush, Stephen W and Fotiu, Randall P and Cheong, Yuk Fai},
  journal={Journal of Educational and Behavioral Statistics},
  volume={24},
  number={4},
  pages={413--438},
  year={1999},
  publisher={Sage Publications}
}

@article{nye2000effects,
  title={The effects of small classes on academic achievement: The results of the Tennessee class size experiment},
  author={Nye, Barbara and Hedges, Larry V and Konstantopoulos, Spyros},
  journal={American Educational Research Journal},
  volume={37},
  number={1},
  pages={123--151},
  year={2000},
  publisher={Sage Publications}
}

@article{lyu2023estimating,
  title={Estimating heterogeneous treatment effects within latent class multilevel models: A bayesian approach},
  author={Lyu, Weicong and Kim, Jee-Seon and Suk, Youmi},
  journal={Journal of Educational and Behavioral Statistics},
  volume={48},
  number={1},
  pages={3--36},
  year={2023},
  publisher={Sage Publications}
}

@book{bryk1992hierarchical,
  title={Hierarchical linear models: applications and data analysis methods},
  author={Bryk, Anthony S and Raudenbush, Stephen W},
  year={1992},
  publisher={Sage Publications, Inc}
}

@book{demidenko2013mixed,
  title={Mixed models: theory and applications with R},
  author={Demidenko, Eugene},
  year={2013},
  publisher={John Wiley \& Sons}
}

@book{generalized,
  title={Generalized Linear Models, 2nd ed.},
  author={McCullagh, Peter and Nelder, John},
  year={1989},
  publisher={Chapman and Hall},
  pages={452}
}

@book{McCullaghNelder1989GLM,
  author    = {McCullagh, P. and Nelder, J. A.},
  title     = {Generalized Linear Models},
  edition   = {2},
  year      = {1989},
  series    = {Monographs on Statistics and Applied Probability},
  number    = {37},
  publisher = {Chapman and Hall},
  address   = {London}
}

@inbook{Vaart_1998, 
  place={Cambridge}, 
  series={Cambridge Series in Statistical and Probabilistic Mathematics}, 
  title={M–and Z-Estimators}, 
  booktitle={Asymptotic Statistics}, 
  publisher={Cambridge University Press}, 
  author={Vaart, A. W. van der}, 
  year={1998}, 
  pages={41–84}, 
  collection={Cambridge Series in Statistical and Probabilistic Mathematics},
  chapter={5}
}

@inproceedings{MeNets,
  title={Mixed effects neural networks (menets) with applications to gaze estimation},
  author={Xiong, Yunyang and Kim, Hyunwoo J and Singh, Vikas},
  booktitle={Proceedings of the IEEE/CVF Conference on Computer Vision and Pattern Recognition},
  pages={7743--7752},
  year={2019}
}

@article{DeepGLMM,
  title={Bayesian deep net GLM and GLMM},
  author={Tran, M-N and Nguyen, Nghia and Nott, David and Kohn, Robert},
  journal={Journal of Computational and Graphical Statistics},
  volume={29},
  number={1},
  pages={97--113},
  year={2020},
  publisher={Taylor \& Francis}
}

@article{Simchoni_Rosset1,
  title={Using random effects to account for high-cardinality categorical features and repeated measures in deep neural networks},
  author={Simchoni, Giora and Rosset, Saharon},
  journal={Advances in Neural Information Processing Systems},
  volume={34},
  pages={25111--25122},
  year={2021}
}

@article{Simchoni_Rosset2,
  author  = {Giora Simchoni and Saharon Rosset},
  title   = {Integrating Random Effects in Deep Neural Networks},
  journal = {Journal of Machine Learning Research},
  year    = {2023},
  volume  = {24},
  number  = {156},
  pages   = {1--57}
}

@inproceedings{pmlr-v202-lee23k,
  title={H-Likelihood Approach to Deep Neural Networks with Temporal-Spatial Random Effects for High-Cardinality Categorical Features},
  author={Lee, Hangbin and Lee, Youngjo},
  booktitle={Proceedings of the 40th International Conference on Machine Learning},
  pages={18974--18987},
  year={2023},
  editor={Krause, Andreas and Brunskill, Emma and Cho, Kyunghyun and Engelhardt, Barbara and Sabato, Sivan and Scarlett, Jonathan},
  volume={202},
  series={Proceedings of Machine Learning Research},
  month={23--29 Jul},
  publisher={PMLR}
}

@article{lee2023subject,
  title={Subject-specific Deep Neural Networks for Count Data with High-cardinality Categorical Features},
  author={Lee, Hangbin and Ha, Il Do and Hwang, Changha and Lee, Youngjo},
  journal={arXiv preprint arXiv:2310.11654},
  year={2023}
}

@inproceedings{tschalzev2024enabling,
  title={Enabling mixed effects neural networks for diverse, clustered data using Monte Carlo Methods},
  author={Tschalzev, Andrej and Nitschke, Paul and Kirchdorfer, Lukas and L{\"u}dtke, Stefan and Bartelt, Christian and Stuckenschmidt, Heiner},
  booktitle={Proceedings of the Thirty-Third International Joint Conference on Artificial Intelligence},
  pages={5018--5024},
  year={2024}
}

@article{kilian2023mixed,
  title={Mixed effects in machine learning--A flexible mixedML framework to add random effects to supervised machine learning regression},
  author={Kilian, Pascal and Ye, Sangbeak and Kelava, Augustin},
  journal={Transactions on Machine Learning Research},
  year={2023}
}

@article{mandel2023neural,
  title={Neural networks for clustered and longitudinal data using mixed effects models},
  author={Mandel, Francesca and Ghosh, Riddhi Pratim and Barnett, Ian},
  journal={Biometrics},
  volume={79},
  number={2},
  pages={711--721},
  year={2023},
  publisher={Wiley Online Library}
}

@book{van2000asymptotic,
  title={Asymptotic statistics},
  author={Van der Vaart, Aad W},
  volume={3},
  year={2000},
  pages={25-29, 46-47},
  publisher={Cambridge University Press}
}

@inproceedings{apostol1958mathematical,
  title={Mathematical analysis},
  author={Apostol, Tom M and Ablow, CM},
  year={1958},
  publisher={American Institute of Physics},
  volume={2},
  pages={276-277}
}

@book{hairer1993solving,
  title={Solving ordinary differential equations I: Nonstiff problems},
  author={Hairer, Ernst and Wanner, Gerhard and N{\o}rsett, Syvert P},
  year={1993},
  pages={162},
  publisher={Springer}
}

@techreport{pisa,
  author={{OECD}},
  title={{PISA 2012 Technical Report}},
  year={2014},
  institution={{OECD Publishing}},
  address={Paris},
  pages={472},
  doi={10.1787/6341a959-en},
  url={https://doi.org/10.1787/6341a959-en},
  note={Programme for International Student Assessment (PISA)}
}

@inproceedings{ranganath2014black,
  title={Black box variational inference},
  author={Ranganath, Rajesh and Gerrish, Sean and Blei, David},
  booktitle={Proceedings of the Seventeenth International Conference on Artificial Intelligence and Statistics},
  pages={814--822},
  year={2014},
  publisher={PMLR}
}

@inproceedings{taylor2019rxrx1,
  author={Taylor, J. and Earnshaw, B. and Mabey, B. and Victors, M. and Yosinski, J.},
  title={RxRx1: An Image Set for Cellular Morphological Variation Across Many Experimental Batches},
  year={2019},
  booktitle={Proceedings of the International Conference on Learning Representations},
  note={AI for Social Good Workshop}
}

\clearpage

\begin{appendices}

\section{Instantiation for discrete outcomes: random-intercept logistic model} \label{sup:logistic}

As in Section \ref{sec:obj}, here we focus on a single group and drop the group index, writing $(x,y,z,\gamma)$ instead of $(\xmat_j,\yvec_j,\zmat_j,\gamma_j)$, where $y\in\{0,1\}^{n_j}$ and $x_i$ denotes the $i$-th row of $x$.

We illustrate the one–dimensional ODE construction in the special case of a random-intercept logistic model.
In this case, conditionally on $(x_i,\mathrm{z},\gamma)$, the response $y_i\in\{0,1\}$ satisfies
\begin{equation}
p(y_i=1 \mid x_i,\mathrm{z},\gamma)
=
\sigma\bigl(f_\theta(x_i) + g_\psi(\mathrm{z})\gamma\bigr),
\end{equation}
where $\gamma \sim \mathcal{N}(0,1)$, $g_\psi(\mathrm{z})\in\R$, and
\begin{equation}
\sigma(u) = \frac{1}{1 + e^{-u}}
\end{equation}
is the logistic function.
Conditionally on $(x,\mathrm{z},\gamma)$, the entries of $y$ are independent, so
\begin{equation}
p(y \mid x,z,\gamma)
=
\prod_{i=1}^{n_j}
\sigma\bigl(f_\theta(x_i) + g_\psi(\mathrm{z})\gamma\bigr)^{y_i}
\Bigl(1-\sigma\bigl(f_\theta(x_i) + g_\psi(\mathrm{z})\gamma\bigr)\Bigr)^{1-y_i}.
\end{equation}
The group-level marginal likelihood is therefore
\begin{equation}
p(y \mid x,z)
=
\int_{\R}
p(y \mid x,z,\gamma)\,\phi_1(\gamma)\, d\gamma.
\end{equation}

The corresponding change of variables is
\begin{equation}
\gamma = \operatorname{logit}(\xi), \qquad \xi \in (0,1),
\end{equation}
where
\begin{equation}
\operatorname{logit}(\xi) = \log \frac{\xi}{1-\xi},
\end{equation}
with derivative
\begin{equation}
\frac{d\gamma}{d\xi}
=
\frac{1}{\xi(1-\xi)}.
\end{equation}
This is an exact reparameterization, and the marginal likelihood becomes
\begin{equation}
p(y \mid x,z)
=
\int_{0}^{1}
p\bigl(y \mid x,z,\operatorname{logit}(\xi)\bigr)\,
\phi_1\bigl(\operatorname{logit}(\xi)\bigr)\,
\frac{1}{\xi(1-\xi)}
\, d\xi.
\end{equation}

Then,
\begin{equation}
p(y \mid x,z)
=
\int_{0}^{1} \exp\bigl(L(\xi; y,x,z)\bigr)\, d\xi,
\end{equation}
where the log-integrand is
\begin{equation}
L(\xi; y,x,z)
\coloneqq
\log p\bigl(y \mid x,z,\operatorname{logit}(\xi)\bigr)
+
\log \phi_1\bigl(\operatorname{logit}(\xi)\bigr)
-
\log\bigl(\xi(1-\xi)\bigr).
\end{equation}
For numerical stability, truncate the interval to $[\epsilon,1-\epsilon]$ with a small constant $\epsilon>0$ and approximate
\begin{equation}
\log p(y \mid x,z)
\approx
\log \int_{\epsilon}^{1-\epsilon} \exp\bigl(L(\xi; y,x,z)\bigr)\, d\xi.
\end{equation}

We approximate the initial condition at $\xi = \epsilon$ by starting from a finite baseline constant $H_0$:
\begin{equation}
\frac{dH(\xi)}{d\xi}
=
\exp\bigl(L(\xi; y,x,z) - H(\xi)\bigr),
\quad
H(\epsilon) = H_0.
\end{equation}
Integrating the ODE over $\xi \in [\epsilon, 1-\epsilon]$ yields
\begin{equation}
H(1-\epsilon)
= 
\log \left( \exp(H(\epsilon)) + \int_{\epsilon}^{1-\epsilon}  \exp\bigl(L(\xi; y,x,z)\bigr)\, d\xi\right) , 
\end{equation}
since $\lim_{\epsilon \to 0^+} \exp(H(\epsilon)) = 0$, for sufficiently small $\epsilon$ we have 
\begin{equation}
\log p(y \mid x,z)
\approx
H(1-\epsilon) \approx
\log \int_{\epsilon}^{1-\epsilon}  \exp\bigl(L(\xi; y,x,z)\bigr)\, d\xi,
\end{equation}
providing a differentiable objective for learning the network parameters $\theta$ and $\psi$.

\section{Explicit integral expressions}
\label{sec:explicitntegrals}
Here we provide explicit expressions for the error rates and error bounds in the Gaussian and Probit models, as well as details on the numerical experiments used to illustrate Theorem \ref{thm:bound}.

\medskip
\noindent\textit{Notational remark.}
Throughout this appendix we fix $(x,\mathrm{z},\theta,\psi)$ and suppress group dependence in the notation.
We reuse the notation $I(y)$ and $I_M(y)$ for the exact and truncated marginal integrals of the \emph{full} one-dimensional integrand, namely
\begin{equation}
\left(\prod_{i=1}^{n_j} h(y_i)\right)
\exp\left(
\sum_{i=1}^{n_j}\left[y_i\,\vartheta(x_i,\mathrm{z},\xi;\theta,\psi)-A\bigl(\vartheta(x_i,\mathrm{z},\xi;\theta,\psi)\bigr)\right]
\right)\phi_1(\xi),
\end{equation}
which differs from the definitions in Section \ref{sec:theory} by the factor $\prod_{i=1}^{n_j} h(y_i)$, independent of $\xi$.
Thus, the truncation error $\Delta_M(y)=\log I(y)-\log I_M(y)$ remains the same under both conventions.

\subsection{Gaussian model}

While the expressions for $I(y)$ and $I_{M}(y)$ below hold for arbitrary $\sigma^{2} > 0$, here we focus on $\sigma^{2} = 1$.

\begin{equation}
I_{M}(y) \coloneqq \int_{-M}^{M} \left(\prod_{i=1}^{n_j} \mathcal{N}(y_i;f_i + r\xi,\sigma^{2})\right)\phi_1(\xi) d\xi.
\end{equation}
where $f_i \coloneqq f_\theta(x_i)$ and $r \coloneqq \Vert g_\psi(\mathrm{z})\Vert$.
Therefore, the posterior $p(\xi \mid y)$ is univariate Gaussian with
\begin{equation}
\mu \coloneqq \E[\xi \mid y] = \frac{r}{\sigma^{2} + n_j r^{2}}\sum_{i=1}^{n_j}(y_i-f_i),
\qquad
s^{2} \coloneqq \mathrm{Var}(\xi \mid y) = \frac{\sigma^{2}}{\sigma^{2} + n_j r^{2}},
\end{equation}
and
\begin{equation}
I_{M}(y) = I(y)\,p\bigl(\vert \xi \vert \le M \mid y\bigr)
= I(y) \, \left[\Phi_1\left(\frac{M-\mu}{s}\right) - \Phi_1\left(\frac{-M-\mu}{s}\right)\right].
\end{equation}
The truncation error is
\begin{equation}
\Delta_{M}(y) \coloneqq \log I(y) - \log I_{M}(y)
= -\log p\bigl(\vert \xi \vert \le M \mid y\bigr).
\end{equation}

\subsubsection{Exact truncation error rate for the Gaussian model}
In the Gaussian model, 
\begin{equation}
L(\xi)
=
\sum_{i=1}^{n_j}\left(y_i(f_i + r\xi) - \frac{1}{2}\bigl(f_i + r\xi\bigr)^{2} - \frac{1}{2}y_i^{2}\right)
-
\frac{1}{2}\xi^{2}
-
\frac{n_j+1}{2}\log(2\pi).
\end{equation}
and by rearranging we get
\begin{equation} \notag
L(\xi)
= -\frac{1 + n_j r^{2}}{2}\,\xi^{2} + r\left(\sum_{i=1}^{n_j}(y_i-f_i)\right)\xi
+ \left(\sum_{i=1}^{n_j} y_i f_i - \frac{1}{2}\sum_{i=1}^{n_j} f_i^{2} - \frac{1}{2}\sum_{i=1}^{n_j} y_i^{2} - \frac{n_j+1}{2}\log(2\pi)\right).
\end{equation}

Define
\begin{equation}
C_{\mathrm{G}} \coloneqq \frac{1 + n_j r^{2}}{2},
\qquad
m \coloneqq \frac{r\sum_{i=1}^{n_j}(y_i-f_i)}{1 + n_j r^{2}}.
\end{equation}
Completing the square, we can write
\begin{equation}
L(\xi)
= -C_{\mathrm{G}}(\xi - m)^{2} + d,
\end{equation}
for a constant $d = d(y,f,r)$ independent of $\xi$. Hence,
\begin{equation}
\exp\bigl(L(\xi)\bigr)
= \exp\bigl(d\bigr)\,\exp\bigl(-C_{\mathrm{G}}(\xi - m)^{2}\bigr).
\end{equation}

Let $\tau^{2} \coloneqq 1/(1 + n_j r^{2}) = 1/(2C_{\mathrm{G}})$ and let $Z$ be a normal random
variable with mean $m$ and variance $\tau^{2}$, with density
\begin{equation}
\varphi_{m,\tau}(\xi)
\coloneqq
\frac{1}{\sqrt{2\pi}\,\tau}
\exp\Bigl(-\frac{(\xi - m)^{2}}{2\tau^{2}}\Bigr).
\end{equation}
Then
\begin{equation}
\exp\bigl(-C_{\mathrm{G}}(\xi - m)^{2}\bigr)
=
\sqrt{2\pi}\,\tau\,\varphi_{m,\tau}(\xi),
\end{equation}
and therefore the tail integral can be written as
\begin{equation}
R_{M}(y) \coloneqq \int_{\vert\xi\vert > M} \exp\bigl(L(\xi)\bigr) d\xi
=
K(y,f,r)\, \mathbb P\bigl(\vert Z \vert > M\bigr),
\end{equation}
where
\begin{equation}
K(y,f,r) \coloneqq \exp\bigl(d(y,f,r)\bigr)\,\sqrt{2\pi}\,\tau.
\end{equation}

For a normal random variable $Z$ with mean $m$ and variance $\tau^{2}$, the Gaussian
tail satisfies
\begin{equation}
\lim_{M\to\infty}
\frac{1}{M^{2}} \log \mathbb P\bigl(\vert Z \vert > M\bigr)
=
-\frac{1}{2\tau^{2}}
=
- C_{\mathrm{G}}.
\end{equation}
Consequently,
\begin{equation}
\lim_{M\to\infty}
\frac{1}{M^{2}} \log R_{M}(y)
=
- C_{\mathrm{G}}.
\end{equation}

As $M\to\infty$ we have $R_{M}(y)\to 0$ and $I_{M}(y)\to I(y) > 0$, so
\begin{equation}
\Delta_{M}(y)
=
\log\Bigl(1 + \frac{R_{M}(y)}{I_{M}(y)}\Bigr),
\end{equation}
and $\frac{R_{M}(y)}{I_{M}(y)} \to 0$, $\frac{I_{M}(y)}{I(y)} \to 1$ imply
\begin{equation}
\lim_{M\to\infty}
\frac{\Delta_{M}(y)}{R_{M}(y)}
=
\frac{1}{I(y)}.
\end{equation}
Combining the last two equations yields
\begin{equation}
\lim_{M\to\infty}
\frac{1}{M^{2}} \log \Delta_{M}(y)
=
- C_{\mathrm{G}}
=
-\frac{1 + n_j r^{2}}{2}.
\end{equation}
In particular, in the canonical Gaussian experiment with $r = 1$ we have
$C_{\mathrm{G}} = (1+n_j)/2$ and the asymptotic slope is $-(1+n_j)/2$.

\subsubsection{Explicit truncation error bound for the Gaussian model}

Define
\begin{equation}
\Delta_{M}^{\mathrm{bound}}(y)
\coloneqq
\frac{C(y;x,z,\theta,\psi)}{M}
\exp\Bigl(-\frac{M^{2}}{2}\Bigr),
\end{equation}
so that $\Delta_{M}(y) \le \Delta_{M}^{\mathrm{bound}}(y)$ for all $M \ge M_{0}(y;x,z,\theta,\psi)$.
From the explicit form of $\Delta_{M}^{\mathrm{bound}}(y)$ we have
\begin{equation}
\lim_{M\to\infty}
\frac{1}{M^{2}} \log \Delta_{M}^{\mathrm{bound}}(y)
=
-\frac{1}{2},
\end{equation}
whereas the exact asymptotic result above gives
\begin{equation}
\lim_{M\to\infty}
\frac{1}{M^{2}} \log \Delta_{M}(y)
=
- C_{\mathrm{G}}
=
-\frac{1 + n_j r^{2}}{2}.
\end{equation}
Therefore,
\begin{equation}
\lim_{M\to\infty}
\frac{1}{M^{2}}
\log\Bigl(
\frac{\Delta_{M}^{\mathrm{bound}}(y)}{\Delta_{M}(y)}
\Bigr)
=
\Bigl(-\frac{1}{2}\Bigr) - \bigl(-C_{\mathrm{G}}\bigr)
=
C_{\mathrm{G}} - \frac{1}{2}
=
\frac{n_j r^{2}}{2}
\ge 0.
\end{equation}
In particular, in the canonical Gaussian example with $r=1$ we obtain
\begin{equation}
\lim_{M\to\infty}
\frac{1}{M^{2}}
\log\Bigl(
\frac{\Delta_{M}^{\mathrm{bound}}(y)}{\Delta_{M}(y)}
\Bigr)
=
\frac{n_j}{2},
\end{equation}
so on the logarithmic scale the gap between the true error and the upper bound grows quadratically in $M$ with slope $n_j/2$.

\subsubsection{Numerical evaluation}

For the experiment in Figure \ref{fig:truncation} we set $\sigma^{2}=1$ and vary the values of $\{f_i\}_{i=1}^{n_j}$, $r$, and $y$.
For fixed $r\neq0$, different choices of $(\{f_i\}_{i=1}^{n_j},y)$ do not affect the leading quadratic tail exponent $C_{\mathrm G}$, and change only lower-order terms.
The truncation levels $M$ are chosen on a uniform grid.

To visualize the theoretical bound from Theorem \ref{thm:bound}, we plot
\begin{equation}
M \mapsto \frac{1}{M}\exp\Bigl(-\frac{M^{2}}{2}\Bigr)
\end{equation}
rescaled by a constant factor so that it matches the exact truncation error at a moderate reference value of $M$.
This rescaling does not affect the tail exponent and allows a direct comparison between the true rate $\exp(-C_{\mathrm{G}}M^{2})$ and the generic bound $\exp(-M^{2}/2)$ in Figure \ref{fig:truncation}.

\subsection{Probit model}
In the Probit model, conditionally on $\xi$, the entries of $y$ are independent and satisfy
\begin{equation}
y_i \mid \xi \sim \mathrm{Bernoulli}\bigl(\Phi(f_i + r\xi)\bigr),
\qquad i=1,\dots,n_j,
\end{equation}
where $f_i \coloneqq f_\theta(x_i)$ and $r \coloneqq \Vert g_\psi(\mathrm{z})\Vert$.
Equivalently,
\begin{equation}
p(y \mid f + r\xi)
=
\prod_{i=1}^{n_j}
\Phi(f_i + r\xi)^{y_i}\,\Phi(-f_i - r\xi)^{1-y_i}.
\end{equation}
The marginal integral is
\begin{equation}
I(y) \coloneqq \int_{\R} p(y \mid f + r\xi)\,\phi_1(\xi)\, d\xi,
\end{equation}
and for $M>0$, the truncated integrals are
\begin{equation}
I_{M}(y) \coloneqq \int_{-M}^{M} p(y \mid f + r\xi)\,\phi_1(\xi)\, d\xi.
\end{equation}

\subsubsection{Exact truncation error rate for the Probit model}

In the Probit model the log–integrand is
\begin{equation}
L(\xi)
= \log p\bigl(y \mid f + r\xi\bigr) + \log \phi_1(\xi),
\end{equation}
and the tail integral is
\begin{equation}
R_{M}(y) \coloneqq I(y) - I_{M}(y)
= \int_{\vert\xi\vert > M} p\bigl(y \mid f + r\xi\bigr)\phi_1(\xi) \, d\xi.
\end{equation}
Let $Z \sim \mathcal{N}(0,1)$ and denote
\begin{equation}
g_{y}(\xi) \coloneqq p\bigl(y \mid f + r\xi\bigr).
\end{equation}
Then
\begin{equation}
R_{M}(y)
=
\int_{\vert\xi\vert > M} g_{y}(\xi)\phi_1(\xi) \, d\xi
=
\E\bigl[g_{y}(Z) \, \mathbbm{1}\{\vert Z \vert > M\}\bigr]
=
p\bigl(\vert Z \vert > M\bigr) \,\E\bigl[g_{y}(Z)\mid \vert Z \vert > M\bigr].
\end{equation}

Let
\begin{equation}
n_1 \coloneqq \sum_{i=1}^{n_j} y_i,
\qquad
n_0 \coloneqq n_j - n_1.
\end{equation}
As $\xi\to+\infty$, each factor $\Phi(f_i+r\xi)^{y_i}\to 1$, whereas each factor
$\Phi(-f_i-r\xi)^{1-y_i}$ decays at a Gaussian tail rate; consequently the $+\infty$
tail is governed by $n_0$.
Similarly, as $\xi\to-\infty$, the $-\infty$ tail is governed by $n_1$.
In particular, the two tails have Gaussian exponents
\begin{equation}
C_{+}(y) \coloneqq \frac{1+n_0 r^{2}}{2},
\qquad
C_{-}(y) \coloneqq \frac{1+n_1 r^{2}}{2},
\qquad
C_{\mathrm{P}}(y) \coloneqq \min\{C_{+}(y),C_{-}(y)\}.
\end{equation}
Then
\begin{equation}
\lim_{M\to\infty}\frac{1}{M^{2}}\log R_{M}(y)
=
- C_{\mathrm{P}}(y).
\end{equation}

In particular, $R_M(y)\to 0$ and $I_M(y)\to I(y)>0$ as $M\to\infty$, so
\begin{equation}
\Delta_{M}(y)
\coloneqq
\log I(y) - \log I_{M}(y)
=
\log\Bigl(1 + \frac{R_{M}(y)}{I_{M}(y)}\Bigr),
\end{equation}
and therefore
\begin{equation}
\lim_{M\to\infty}\frac{1}{M^{2}}\log \Delta_{M}(y)
=
- C_{\mathrm{P}}(y).
\end{equation}
Since $C_{\mathrm{P}}(y)\ge \tfrac{1}{2}$ with equality if and only if $n_0=0$ or $n_1=0$, the generic bound from Theorem~\ref{thm:bound} has the correct worst-case Gaussian tail exponent $\tfrac{1}{2}$, while for mixed response vectors (both $0$ and $1$ present) the truncation error decays faster.

\subsubsection{Numerical evaluation}
The exact marginal $I(y)$ and the truncated integrals $I_{M}(y)$ are computed by numerical quadrature of the one-dimensional integrals above.
The truncation levels $M$ are chosen on the same grid as in the Gaussian experiment.

To visualise the theoretical bound from Theorem~\ref{thm:bound}, we overlay
\begin{equation}
M \mapsto \frac{1}{M}\exp\Bigl(-\frac{M^{2}}{2}\Bigr),
\end{equation}
rescaled by a constant factor so that it matches the exact truncation error $\Delta_{M}(y)$ at a moderate reference value of $M$.
This rescaling does not affect the Gaussian tail exponent $1/2$ in $M^{2}$, and allows a direct comparison between $\Delta_{M}(y)$ and the generic bound from Theorem~\ref{thm:bound} in Figure~\ref{fig:truncation}.

\section{Analysis of numerical errors} \label{sup:numerical}

Recall that, for a fixed truncation radius $M = \frac{1}{2} (\xi_1 - \xi_0)>0$, our estimation procedure targets the truncated empirical objective
\begin{equation}
\bar{\ell}_M(\theta,\psi)
\coloneqq
\frac{1}{m}\sum_{j=1}^{m}
\Bigl[
\sum_{i=1}^{n_j}\log h(y_{ij})
+
H(\xi_{1};\yvec_j,\xmat_j,\zmat_j,\theta,\psi)
\Bigr],
\end{equation}
but in practice replaces $H$ by its numerical approximation $\Hnum$ and optimizes the ODE-based objective
\begin{equation}
\tilde{\ell}_M(\theta,\psi)
\coloneqq
\frac{1}{m}\sum_{j=1}^{m}
\Bigl[
\sum_{i=1}^{n_j}\log h(y_{ij})
+
\Hnum(\xi_{1};\yvec_j,\xmat_j,\zmat_j,\theta,\psi)
\Bigr].
\end{equation}
Moreover, we maximize $\tilde{\ell}_M(\theta,\psi)$ by stochastic gradient ascent:
let $\alpha_t = (\theta_t,\psi_t)$ denote the parameters at iteration $t$, and let
$\mathcal{B}_t \subset \{1,\dots,m\}$ be a mini-batch of size $\vert \mathcal{B}_t \vert = b$,
sampled uniformly at random from the $m$ groups.
The SGD update takes the form
\begin{equation}
\alpha_{t+1}
=
\alpha_t
+
\zeta_t g_t,
\end{equation}
where $\zeta_t>0$ is a step-size and
\begin{equation}
g_t
=
\frac{1}{\vert \mathcal{B}_t \vert}
\sum_{j\in\mathcal{B}_t}
\nabla_{\alpha}
\Bigl(
\sum_{i=1}^{n_j}\log h(y_{ij})
+
\Hnum(\xi_{1};\yvec_j,\xmat_j,\zmat_j,\alpha_t)
\Bigr)
\end{equation}
is the mini-batch stochastic gradient at $\alpha_t$.

In our setting, both the forward quantity $H(\xi;y,x,z,\theta,\psi)$ and the adjoint sensitivities
$\nabla_{\alpha} H(\xi;y,x,z,\theta,\psi)$ are defined as solutions of smooth ODEs on the finite interval $[\xi_0,\xi_1]$, and we approximate them with standard adaptive ODE solvers.
Classical numerical analysis for one-step methods (see, e.g., Theorem 3.6 and equation 3.19 in Chapter II.3 of \citealp{hairer1993solving} for convergence and global error bounds, and Section II.8 for asymptotic expansions)
implies that, under smoothness and Lipschitz conditions, for any compact parameter set $\ParamSet$ and any bounded subset $D$ of the data space, there exist constants $C_{\mathrm{val}},C_{\mathrm{grad}}<\infty$ such that for all $(y,x,z)\in D$,
\begin{equation}
\sup_{\alpha\in\ParamSet}
\bigl\vert
\Hnum(\xi_{1};y,x,z,\alpha)
-
H(\xi_{1};y,x,z,\alpha)
\bigr\vert
\le
\tau \, C_{\mathrm{val}},
\end{equation}
and
\begin{equation}
\sup_{\alpha\in\ParamSet}
\bigl\Vert
\nabla_{\alpha}
\Hnum(\xi_{1};y,x,z,\alpha)
-
\nabla_{\alpha}
H(\xi_{1};y,x,z,\alpha)
\bigr\Vert
\le
\tau \, C_{\mathrm{grad}},
\end{equation}
where $\tau>0$ is the solver tolerance (or a bound proportional to the stepsize).
Substituting these bounds into the mini-batch objective corresponding to $\bar{\ell}_M$ and $\tilde{\ell}_M$ respectively shows that for the same constants and any compact $\ParamSet$,
\begin{equation}
\sup_{\alpha\in\ParamSet}
\bigl\vert
\tilde{\ell}_M^{\mathrm{mb}}(\alpha) - \bar{\ell}_M^{\mathrm{mb}}(\alpha)
\bigr\vert
\le
C_{\mathrm{val}} \,\tau,
\qquad
\sup_{\alpha\in\ParamSet}
\bigl\Vert
\nabla_{\alpha}\tilde{\ell}_M^{\mathrm{mb}}(\alpha)
-
\nabla_{\alpha}\bar{\ell}_M^{\mathrm{mb}}(\alpha)
\bigr\Vert
\le
C_{\mathrm{grad}} \,\tau.
\end{equation}

Thus, fixing the forward and adjoint solvers and their tolerances defines a deterministic discretized truncated objective $\tilde{\ell}_M^{\mathrm{mb}}(\theta,\psi)$ whose values and gradients are uniformly close to those of $\bar{\ell}_M^{\mathrm{mb}}(\theta,\psi)$.
Stochastic gradient ascent is therefore optimizing this fixed discretized objective, and the effect of numerical ODE error is fully captured by the tolerance-dependent perturbation bounds above, rather than by an accumulation of error across iterations.

In our model, $f_{\theta}$ and $g_{\psi}$ are neural networks.
For any compact parameter set and any bounded subset of the data space, neural networks with smooth activation functions (e.g. $\tanh$ or softplus) are continuously differentiable and globally Lipschitz in $(x,z,\alpha)$ on that set.
If ReLU activations are used instead, the networks remain locally Lipschitz and piecewise $C^{1}$, so the forward and adjoint ODEs still satisfy the required Lipschitz conditions.
Formally, the numerical error bounds above apply on any compact subset of parameter space; in practice, optimization (or standard regularization such as weight decay) keeps the iterates in a bounded region, so the analysis applies on the subset of $\ParamSet$ visited in practice.

\section{Proofs}

\subsection{Proof of Lemma \ref{lemma:int}} \label{sec:proof_lemma_int}

\begin{proof}

For $g_\psi(\mathrm{z})=0$ the identity holds trivially as both sides equal $p(y\mid x,0)$. 
Therefore, assume $g_\psi(\mathrm{z})\neq0$ so that
\begin{equation}
    p\bigl(y \mid x, z \bigr) = \int_{\R^k} p\bigl(y \mid x, g_\psi(\mathrm{z})^{\intercal}\gamma\bigr)\,\phi_k(\gamma)\,d\gamma.   
\end{equation}
Denote
\begin{equation}
   \pmb u \coloneqq \frac{g_\psi(\mathrm{z})}{\Vert g_\psi(\mathrm{z})\Vert}.
\end{equation}
Extend $\pmb u$ to an orthonormal basis $\{\pmb u,\pmb w_1,\dots,\pmb w_{k-1}\}$ of $\R^k$, and denote by 
\begin{equation}
\pmb W \coloneqq
\begin{pmatrix}
\pmb u & \pmb w_1 & \dots & \pmb w_{k-1}
\end{pmatrix}
\in \R^{k\times k}
\end{equation}
the resulting orthogonal matrix, so that $\pmb W^{\intercal}\pmb W = I_k$.

We perform the change of variables
\begin{equation}
s \coloneqq \pmb W^{\intercal}\gamma.
\end{equation}
Since $\pmb W^{\intercal}$ is orthogonal, we have $\gamma = \pmb W s$ and the Jacobian determinant of this transformation is $1$, so $d\gamma = ds$. Moreover,
\begin{equation}
g_\psi(\mathrm{z})^{\intercal}\gamma =
g_\psi(\mathrm{z})^{\intercal}\pmb Ws = 
(\pmb W^{\intercal} g_\psi(\mathrm{z}))^{\intercal} s.
\end{equation}
By construction, $\pmb W^{\intercal} g_\psi(\mathrm{z}) = \Vert g_\psi(\mathrm{z})\Vert e_1$, where $e_1$ is the first standard basis vector in $\R^k$. Hence
\begin{equation}
g_\psi(\mathrm{z})^{\intercal}\gamma
=
\Vert g_\psi(\mathrm{z})\Vert\,s_1.
\end{equation}

Under $\gamma\sim\mathcal{N}(0,I_k)$ and the orthogonal change of variables, we have $s\sim\mathcal{N}(0,I_k)$ and
\begin{equation}
\phi_k(\gamma)\,d\gamma
=
\phi_k(s)\,ds.
\end{equation}
Therefore,
\begin{align}
& \int_{\R^k} p\bigl(y \mid x, g_\psi(\mathrm{z})^{\intercal}\gamma\bigr)\,\phi_k(\gamma)\,d\gamma
=
\int_{\R^k} p\bigl(y \mid x,\Vert g_{\psi}(\mathrm{z})\Vert s_1\bigr)\,\phi_k(s)\,ds \\
&=
\int_{\R^k} \left(p\bigl(y \mid x,\Vert g_{\psi}(\mathrm{z})\Vert s_1\bigr)\,\prod_{\ell=1}^k \phi_1(s_\ell)\right)\,ds_1 \cdots ds_k \\
&=
\int_{\R} \left(p\bigl(y \mid x,\Vert g_{\psi}(\mathrm{z})\Vert s_1\bigr)\,\phi_1(s_1)
\int_{\R^{k-1}}\prod_{\ell=2}^k \phi_1(s_\ell)  \,ds_2 \cdots ds_k  \right) \,  ds_1 \\
&=
\int_{\R} p\bigl(y \mid x,\Vert g_{\psi}(\mathrm{z})\Vert s_1\bigr)\,\phi_1(s_1)\,ds_1.
\end{align}
Setting $\xi \coloneqq s_1$ completes the proof.
\end{proof}

\subsection{Proof of Proposition \ref{prop:binary_new_groups}} \label{sec:proof_binary_new_groups}

\begin{proof}
For any $a,b\in\R$ define
\begin{equation}
h(a,b)
\coloneqq
\int_{-\infty}^{\infty}\eta^{-1}(a\xi+b)\,\phi_1(\xi)\,d\xi
=
\E\bigl[\eta^{-1}(a\xi+b)\bigr],
\qquad \xi\sim\mathcal{N}(0,1).
\end{equation}
Let
\begin{equation}
p_1 \coloneqq p(y_{ij^*}=1\mid x_{ij^*},z_{j^*})
=
h(\Vert g_\psi(z_{j^*})\Vert,f_\theta(x_{ij^*})),
\qquad
p_0 \coloneqq 1-p_1.
\end{equation}
Thus,
\begin{equation}
\arg\max_{c\in\{0,1\}}p(y_{ij^*}=c\mid x_{ij^*},z_{j^*})=1
\ \Leftrightarrow\
h(\Vert g_\psi(z_{j^*})\Vert,f_\theta(x_{ij^*}))-\frac{1}{2}>0.
\end{equation}

For any fixed $a,b\in\R$, using $1-\eta^{-1}(u)=\eta^{-1}(-u)$ and symmetry of $\phi_1$,
\begin{align}
h(a,b)
&=
\int_{-\infty}^{\infty}\eta^{-1}(a\xi+b)\,\phi_1(\xi)\,d\xi \\
&=
\int_{-\infty}^{\infty}\left(1-\eta^{-1}(-a\xi-b)\right)\phi_1(\xi)\,d\xi
=
1-h(a,-b).
\end{align}
In particular, $h(a,0)=1-h(a,0)$ implies
\begin{equation}\label{eq:half}
h(a,0)=\frac{1}{2}.
\end{equation}

Additionally, since $\eta^{-1}$ is strictly increasing, $h(a,b)$ is strictly increasing in $b$ for each fixed $a$.
Combined with \eqref{eq:half}, this yields $h(a,b)>\frac{1}{2}\Leftrightarrow b>0$.
Therefore,
\begin{equation}
\text{sign}\left(h(\Vert g_\psi(z_{j^*})\Vert,f_\theta(x_{ij^*}))-\frac{1}{2}\right)
=
\text{sign}\left(f_\theta(x_{ij^*})\right),
\end{equation}
and the classifier depends only on $f_\theta(x_{ij^*})$.
\end{proof}

\subsection{Proof of Theorem \ref{thm:bound}} \label{sec:bound_proof}

\begin{proof}
Recall that for fixed $(x,z,\theta,\psi)$ and $y$ we have
\begin{equation}
L(\xi)
=
\sum_{i=1}^{n_j}
\left(
y_i\,\vartheta_i(\xi) - A\bigl(\vartheta_i(\xi)\bigr)
\right)
+ \log \phi_1(\xi),
\end{equation}
where
\begin{equation}
\vartheta_i(\xi) \coloneqq \vartheta(x_i,\mathrm{z},\xi;\theta,\psi),
\qquad
\phi_1(\xi) = (2\pi)^{-1/2} \exp\bigl(-\xi^{2}/2\bigr),
\end{equation}
and $x_i$ denotes the $i$-th row of $x$.

Since $p(u \mid \vartheta)$ is a one–parameter exponential family with natural statistic $T(u)=u$, the convex conjugate
\begin{equation}
A^{*}(u) \coloneqq \sup_{\vartheta}\bigl(u\vartheta - A(\vartheta)\bigr)
\end{equation}
is finite for all $u$ in the effective domain of $A^*$.
In particular, for all $\xi \in \R$ and all $i=1,\dots,n_j$, 
\begin{equation}
y_i\,\vartheta_i(\xi) - A\bigl(\vartheta_i(\xi)\bigr)
\le
A^{*}(y_i).
\end{equation}
Summing over $i$ yields
\begin{equation}
\sum_{i=1}^{n_j}
\left(
y_i\,\vartheta_i(\xi) - A\bigl(\vartheta_i(\xi)\bigr)
\right)
\le
\sum_{i=1}^{n_j} A^{*}(y_i),
\end{equation}
and therefore
\begin{equation}
L(\xi)
\le
\sum_{i=1}^{n_j} A^{*}(y_i) + \log \phi_1(\xi)
=
\sum_{i=1}^{n_j} A^{*}(y_i) - \frac{1}{2}\log(2\pi) - \frac{1}{2}\xi^{2}.
\end{equation}
Define
\begin{equation}
\widetilde C(y) \coloneqq \exp\left(\sum_{i=1}^{n_j} A^{*}(y_i)\right)(2\pi)^{-1/2}.
\end{equation}
Then
\begin{equation}\label{eq:envelope_half}
\exp\bigl(L(\xi)\bigr)
\le
\widetilde C(y)\exp\Bigl(-\frac{1}{2}\xi^{2}\Bigr)
\quad\text{for all } \xi \in \R,
\end{equation}
and
\begin{equation}
0 < I(y)
=
\int_{\R} \exp\bigl(L(\xi)\bigr) d\xi
\le
\widetilde C(y) \int_{\R} \exp\Bigl(-\frac{1}{2}\xi^{2}\Bigr) d\xi
<
\infty.
\end{equation}

For $M>0$ define
\begin{equation}
R_M(y) \coloneqq \int_{\vert\xi\vert > M} \exp\bigl(L(\xi)\bigr) d\xi,
\end{equation}
so that
\begin{equation}
I(y) = I_M(y) + R_M(y).
\end{equation}
By definition,
\begin{equation}
\Delta_M(y)
=
\log I(y) - \log I_M(y)
=
\log\left(1 + \frac{R_M(y)}{I_M(y)}\right).
\end{equation}
Since $\log(1+u) \le u$ for all $u\ge0$,
\begin{equation}\label{eq:DeltaM_ratio}
0 \le \Delta_M(y)
\le
\frac{R_M(y)}{I_M(y)}.
\end{equation}

We proceed by deriving upper bounds for $R_M(y)$ and $1/I_M(y)$, starting with the former.
From Equation \eqref{eq:envelope_half},
\begin{equation}
R_M(y)
\le
\widetilde C(y)
\int_{\vert\xi\vert > M} \exp\Bigl(-\frac{1}{2}\xi^{2}\Bigr) d\xi
=
2 \widetilde C(y)
\int_{M}^{\infty} \exp\Bigl(-\frac{1}{2} t^{2}\Bigr) dt.
\end{equation}
For $a>0$ the standard bound
\begin{equation}
\int_{a}^{\infty} \exp\Bigl(-\frac{1}{2} t^{2}\Bigr) dt
\le
\frac{1}{a}\exp\Bigl(-\frac{1}{2} a^{2}\Bigr)
\end{equation}
yields, with $a=M$,
\begin{equation}\label{eq:RM_bound}
R_M(y)
\le
2 \widetilde C(y)\frac{1}{M}\exp\Bigl(-\frac{1}{2} M^{2}\Bigr).
\end{equation}

Now, moving to $1/I_M(y)$, since
\begin{equation}
R_M(y)\longrightarrow 0
\quad \text{as } M \to \infty,
\end{equation}
and $I(y) > 0$, there exists $M_0 = M_0(y;x,z,\theta,\psi)$ such that $R_M(y) \le \frac{1}{2} I(y)$ for all $M \ge M_0$.
For such $M$,
\begin{equation}\label{eq:IM_lower}
I_M(y)
= I(y) - R_M(y)
\ge \frac{1}{2} I(y),
\end{equation}
and hence
\begin{equation}\label{eq:IM_inv}
\frac{1}{I_M(y)}
\le
\frac{2}{I(y)}
\quad\text{for all } M \ge M_0.
\end{equation}

Combining the bounds in \eqref{eq:RM_bound} and \eqref{eq:IM_inv} with Equation \eqref{eq:DeltaM_ratio}, for all $M \ge M_0$ we obtain
\begin{equation}
\Delta_M(y)
\le
\frac{R_M(y)}{I_M(y)}
\le
\frac{2}{I(y)}
\cdot
2 \widetilde C(y)\frac{1}{M}\exp\Bigl(-\frac{1}{2} M^{2}\Bigr)
=
C \frac{1}{M}\exp\Bigl(-\frac{1}{2} M^{2}\Bigr),
\end{equation}
where
\begin{equation}
C \coloneqq C(y;x,z,\theta,\psi)
=
\frac{4 \widetilde C(y)}{I(y)}.
\end{equation}
Taking logarithms and dividing by $M^{2}$ yields
\begin{equation}
\frac{1}{M^{2}} \log \Delta_M(y)
\le
-\frac{1}{2}
+
\frac{1}{M^{2}} \log C
-
\frac{1}{M^{2}} \log M,
\end{equation}
and therefore,
\begin{equation}
\limsup_{M\to\infty}
\frac{1}{M^{2}} \log \Delta_M(y)
\le
-\frac{1}{2}.
\end{equation}
\end{proof}

\section{Multivariate random-effects design} \label{sup:multi}

In the main text we focus on the case where the random-effects design depends only on the group ($z_{ij}=\mathrm{z}$ for all $i$ in the $j$-th group), for which the group likelihood depends on $\gamma\in\mathbb{R}^k$ only through a single scalar projection. In this case Lemma~\ref{lemma:int} reduces the marginalization integral to one dimension.
Here we consider the general case where $z_{ij}$ may vary across subjects within a group.
For a fixed group $(x,y,z)$ and parameters $(\theta,\psi)$,
the random-effects network returns a matrix
\begin{equation}
g_\psi(z)\in\mathbb{R}^{n\times k},
\end{equation}
so that the random-effects contribution is the vector
\begin{equation}
t \coloneqq g_\psi(z)\gamma \in \mathbb{R}^n,
\qquad \gamma\sim\mathcal{N}(0,I_k).
\end{equation}

Since $\gamma$ is Gaussian and $t=g_\psi(z)\gamma$ is a linear map of $\gamma$, the random vector $t$ is (possibly degenerate) Gaussian with
\begin{equation}
t \sim \mathcal{N}\bigl(0,\Sigma\bigr),
\qquad
\Sigma \coloneqq g_\psi(z)\,g_\psi(z)^{\intercal}\in\mathbb{R}^{n\times n}.
\end{equation}
Therefore, the group-level marginalization over $\gamma$ can equivalently be expressed as an expectation with respect to this induced Gaussian distribution of $t$.

Conditionally on $(x,z,\gamma)$, the entries of $y$ are independent and depend on $\gamma$ only through $t=g_\psi(z)\gamma$.
Using the shorthand $p(y_i\mid x_i,t_i;\theta,\psi)$ defined in Equation~\eqref{eq:shorthand_pyxt}, the group marginal likelihood can be written as
\begin{equation}
p(y \mid x,z;\theta,\psi)
=
\int
\left(
\prod_{i=1}^{n} p(y_i\mid x_i,t_i;\theta,\psi)
\right)
\, d\, \mathcal{N}(0,\Sigma)\, (t).
\end{equation}
If $\Sigma$ has rank $r$, we may reduce the integral to $\mathbb{R}^r$ by writing $\Sigma = U\Lambda U^{\intercal}$ with $U\in\mathbb{R}^{n\times r}$ having orthonormal columns and $\Lambda\in\mathbb{R}^{r\times r}$ diagonal with positive entries, and setting $t = U\Lambda^{1/2}s$ with $s\sim\mathcal{N}(0,I_r)$.
In this representation, the group marginal log-likelihood takes the form
\begin{equation}
\log p(y \mid x,z;\theta,\psi)
=
\sum_{i=1}^{n}\log h(y_i)
+
\log \int_{\mathbb{R}^{r}} \exp\bigl(L_r(s; y,x,z,\theta,\psi)\bigr)\,ds,
\end{equation}
for a corresponding multivariate log-integrand $L_r$ that includes the Gaussian density term of $s$.

Analogously to the one-dimensional truncation in Equation~\eqref{eq:trunc}, we define a truncated multivariate log-integral over the Euclidean ball:
\begin{equation}
H_r(M; y,x,z,\theta,\psi)
\coloneqq
\log \int_{\Vert s\Vert \le M} \exp\bigl(L_r(s; y,x,z,\theta,\psi)\bigr)\,ds.
\end{equation}
This quantity reduces to $H(\xi_1; y,x,z,\theta,\psi)$ from the main text when $r=1$.

\paragraph{\emph{Optimization.}}
In the general case of $r>1$, the one-dimensional ODE representation is no longer available.
Instead,  $H_r(M; y,x,z,\theta,\psi)$ can be approximated by a differentiable quadrature rule.
Let $\{(s_q,w_q)\}_{q=1}^{Q}$ be quadrature nodes $s_q\in\mathbb{R}^r$ and weights $w_q>0$ supported on $\{s:\Vert s\Vert\le M\}$, and define
\begin{equation}
\tilde H_r(M; y,x,z,\theta,\psi)
\coloneqq
\log\left(
\sum_{q=1}^{Q} w_q \exp\bigl(L_r(s_q; y,x,z,\theta,\psi)\bigr)
\right).
\end{equation}
This is a log-sum-exp expression, and hence is differentiable with respect to $(\theta,\psi)$.
Gradients can be obtained by automatic differentiation through this quadrature approximation, enabling end-to-end training under general random-effects designs.
In this case, Algorithm~\ref{alg:truncated_ode_sgd} is modified by replacing the ODE solver for $H(\xi_1)$ with quadrature evaluation of $\tilde H_r(M)$ for each sampled group.

\paragraph{\emph{Truncation bounds.}}
The truncation analysis in Section~\ref{sec:theory} extends to the multivariate integral above.
The key step in Theorem~\ref{thm:bound} is an envelope bound of the form
\begin{equation}
\exp(L(\xi)) \le \widetilde C(y)\exp\left(-\frac{1}{2}\xi^2\right),
\end{equation}
which follows from bounding the exponential-family term by a constant (via the convex conjugate $A^*$) and retaining the Gaussian prior term.
In the multivariate representation, the same argument yields an envelope
\begin{equation}
\exp\bigl(L_r(s)\bigr) \le \widetilde C(y)\exp\left(-\frac{1}{2}\Vert s\Vert^2\right),
\qquad s\in\mathbb{R}^r,
\end{equation}
so the truncation remainder over $\{\Vert s\Vert>M\}$ is bounded by a Gaussian tail integral in dimension $r$.
Consequently, the difference between the full and truncated log-integrals,
\begin{equation}
\Delta_{r,M}(y;x,z,\theta,\psi)
\coloneqq
\log\int_{\mathbb{R}^{r}} \exp(L_r(s))\,ds
-
\log\int_{\Vert s\Vert\le M} \exp(L_r(s))\,ds,
\end{equation}
decays at a Gaussian-tail rate in $M^2$ (up to a dimension-dependent polynomial prefactor), matching the worst-case exponent $1/2$ from the one-dimensional bound.

\section{Linear simulations} \label{sup:simualations}

Similarly to the simulation results in Section \ref{sec:synthetic}, here we report results for analogous experiments for data generated from a linear mixed-effects model.

The grouped structure, covariate distribution, and outcome families are identical to the nonlinear setting, but here we consider a linear fixed-effect function
\begin{equation}
f^{*}\left(x^{(1)},x^{(2)}\right)
=
\beta_0
+
\beta_1 x^{(1)}
+
\beta_2 x^{(2)},
\qquad
(\beta_0,\beta_1,\beta_2) = (0,0.1,0.1).
\end{equation}
This yields
\begin{equation}
\eta_{ij}
=
f^{*}\left(x^{(1)}_{ij}, x^{(2)}_{ij}\right)
+
\gamma_j
\end{equation}
for the \emph{random intercept} setting, and
\begin{equation}
\eta_{ij}
=
f^{*}\left(x^{(1)}_{ij}, x^{(2)}_{ij}\right)
+
\gamma_j x^{(1)}_{ij}
\end{equation}

In these linear experiments we reduced our neural-networks to a linear architecture, with both the fixed and random components represented by single linear layers.
As expected, Figure \ref{fig:simulations_linear} shows that under correct linear specification, GLMMs outperform our approach in approximately $70\%$ of settings ($25/36$), with most differences smaller in magnitude compared to the gains observed in the nonlinear setting.

\begin{figure} [ht]
    \centering
    \includegraphics[width=\linewidth]{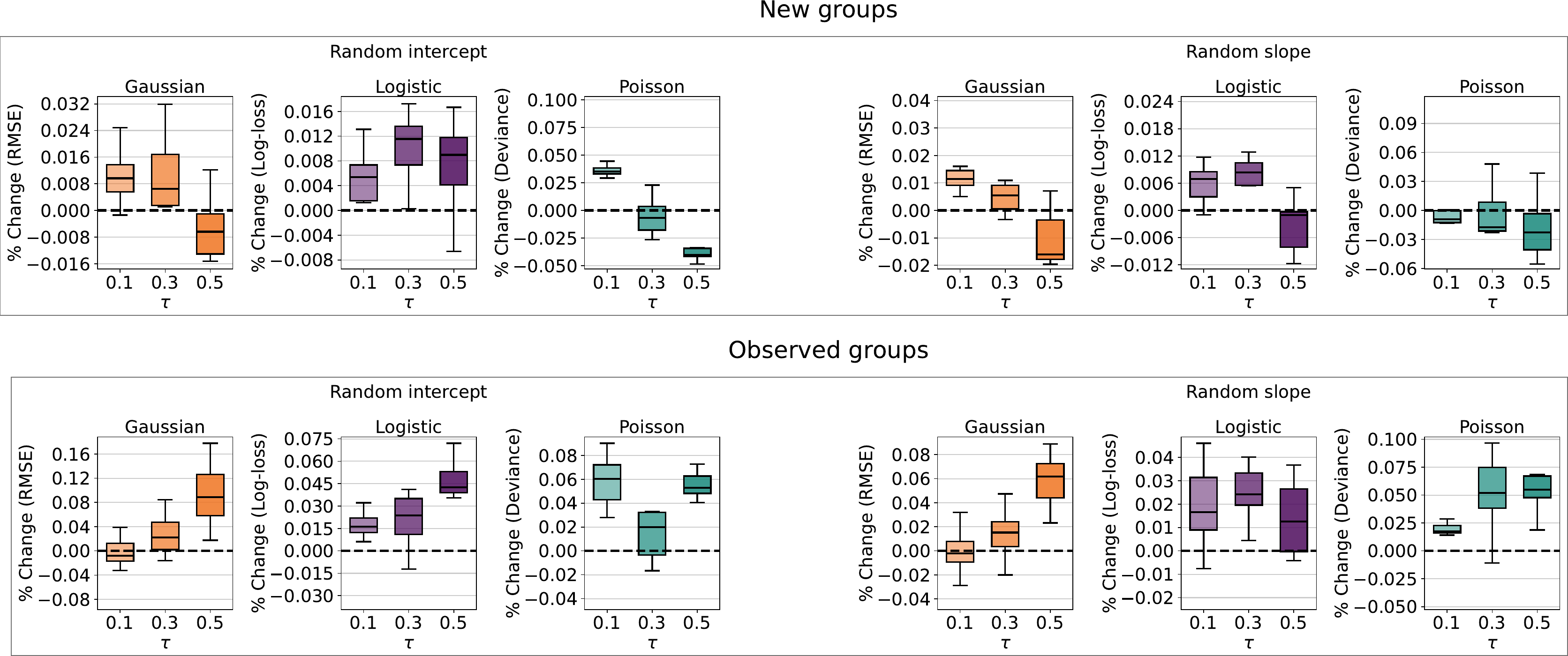}
    \caption{Simulation results for linear links.}
    \label{fig:simulations_linear}
\end{figure}

\section{Experimental comparison to BBVI} \label{sec:bbvi}

In this paper we developed a tailored approximate marginal-likelihood objective to fit the model in Equations \eqref{eq:our_model} and \eqref{eq:prior}. Alternatively, the same model could be learned using generic procedures such as black-box variational inference (BBVI) \citep{ranganath2014black}. In Figures \ref{fig:airbnb_bbvi} and \ref{fig:rxrx_bbvi} we show empirically that NGMM outperforms BBVI on the Airbnb and RxRx1 datasets respectively.

\begin{figure} [ht]
    \centering
    \includegraphics[width=0.8\linewidth]{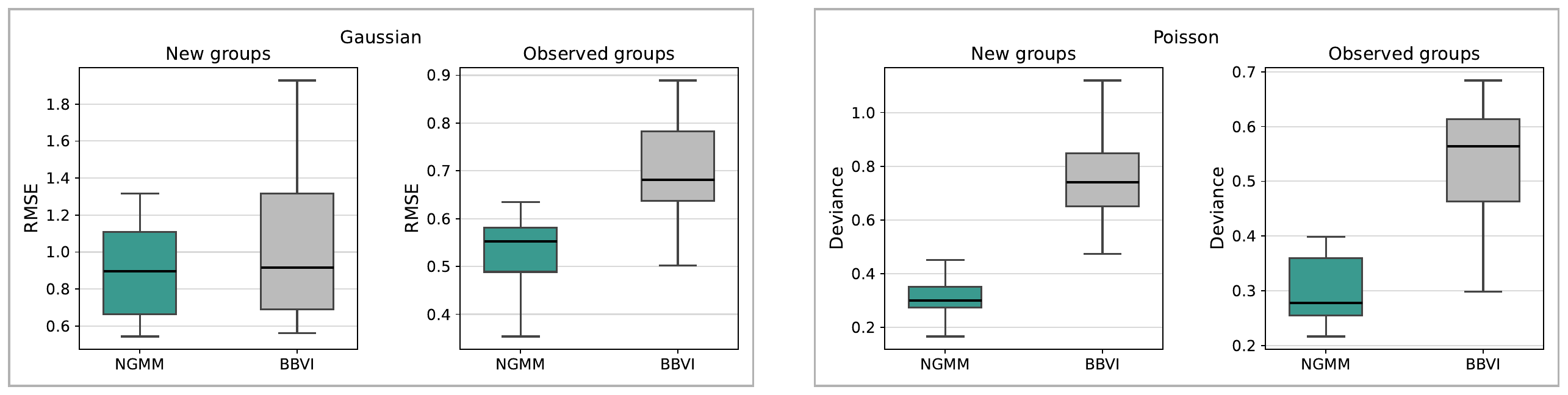}
    \caption{Airbnb experimental comparison of NGMM (this paper) to BBVI \citep{ranganath2014black}.}
    \label{fig:airbnb_bbvi}
\end{figure}

\begin{figure} [ht]
    \centering
    \includegraphics[width=0.8\linewidth]{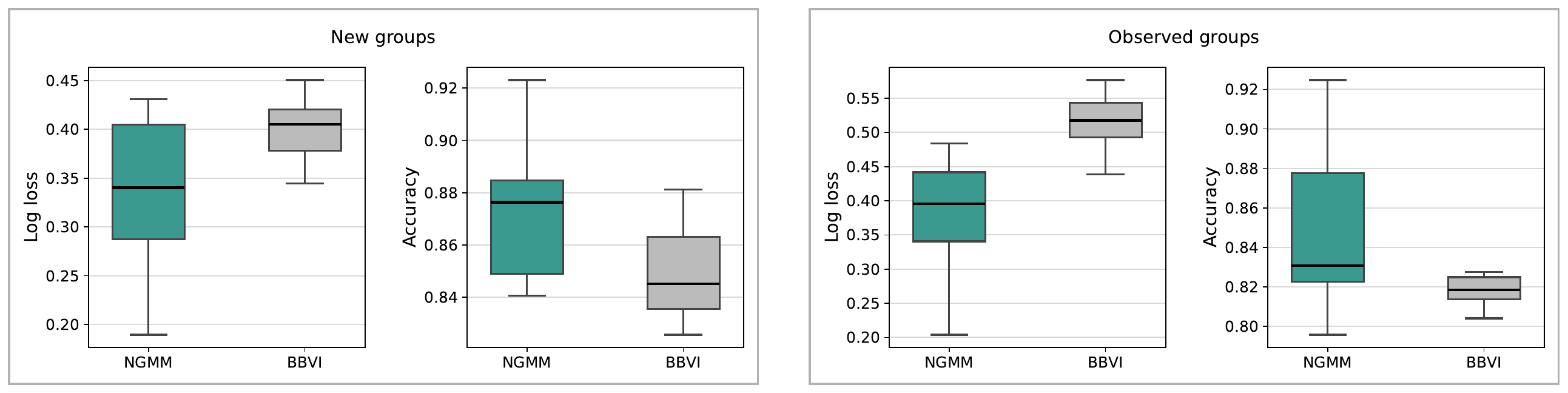}
    \caption{RxRx1 experimental comparison of NGMM (this paper) to BBVI \citep{ranganath2014black}.}
    \label{fig:rxrx_bbvi}
\end{figure}

\section{Additional results for PISA analysis} \label{sup:pisa_results}

We provide the experimental results for the science and reading domains in Figures \ref{fig:pisa_science} and \ref{fig:pisa_reading}, respectively. As in the math results (see Figure \ref{fig:pisa_math}), 
in all six countries, neural models outperform their linear counterparts. For previously observed schools, incorporating random effects substantially improves performance in both linear and neural models. For new schools, the incremental benefit of adding random effects to the neural model is modest.

\begin{figure} [ht]
    \centering
    \includegraphics[width=\linewidth]{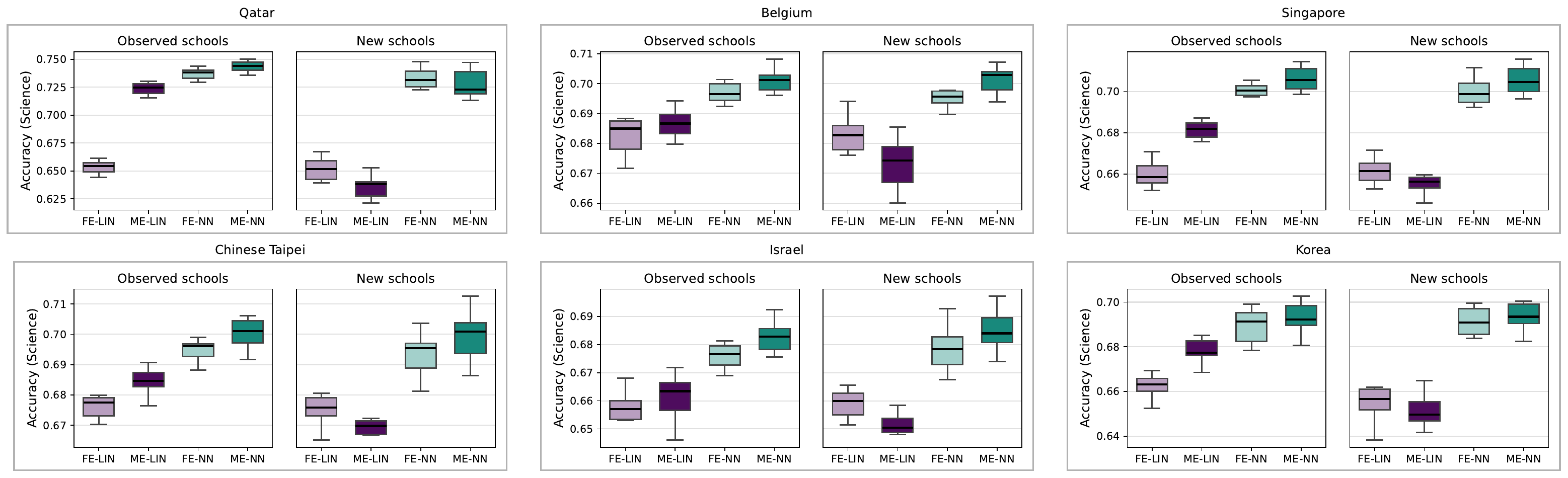}
    \caption{PISA student proficiency in science: experimental results.}
    \label{fig:pisa_science}
\end{figure}

\begin{figure} [ht]
    \centering
    \includegraphics[width=\linewidth]{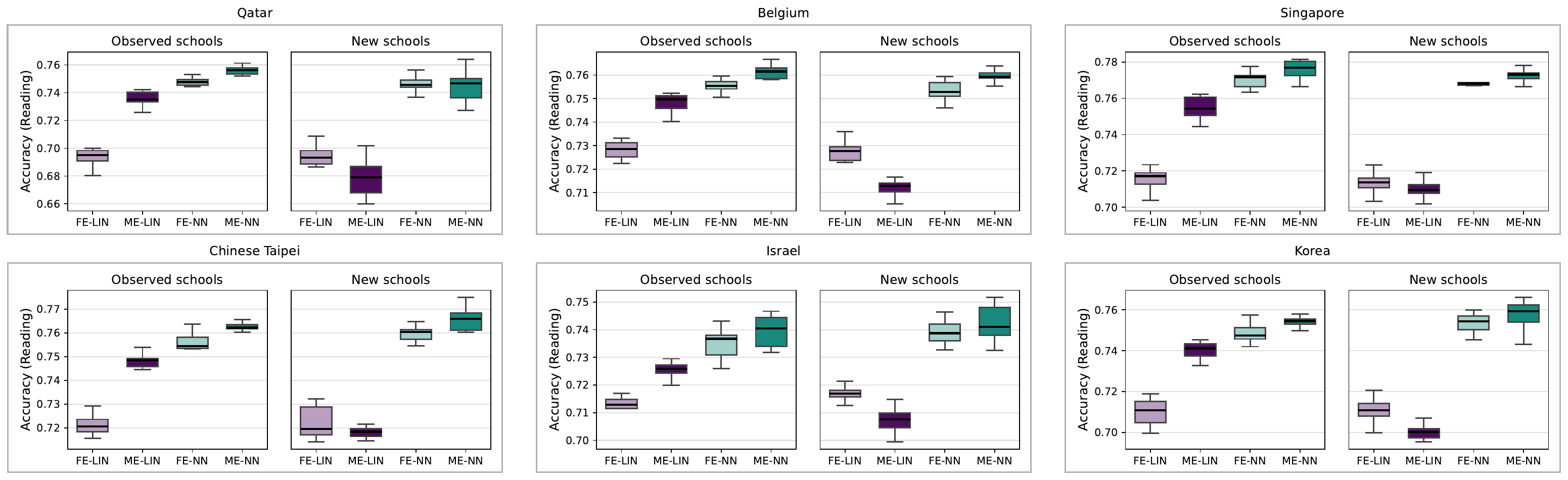}
    \caption{PISA student proficiency in reading: experimental results.}
    \label{fig:pisa_reading}
\end{figure}

\section{Additional experimental details} 

\subsection{Synthetic studies} \label{sup:synthetic}
For the Gaussian model we set the observation noise standard deviation to $\sigma_\varepsilon = 1.0$. 
In each repetition of the experiment, we split the corresponding data into training, test-existing, and test-new subsets. We assign $\lfloor 0.7\,m \rfloor$ groups to the training set and the remaining groups to the held-out (new) set. Within each training group we randomly assign $\lfloor 0.7\,n_j \rfloor$ observations to the training subset and keep the remainder as the existing-groups test subset.

For the Gaussian random-effects model we use the \texttt{MixedLM} implementation from \texttt{statsmodels}. For the logistic and Poisson cases we fit generalized linear mixed models using the \texttt{glmer} function from the \texttt{lme4} package in R.

\subsection{Airbnb Experiments} \label{sup:airbnb}

\paragraph{\emph{Data pre-processing.}}
We retain only numeric covariates with at most 5 covariates 
5\% missing values. Remaining missing entries are imputed with the training-set median of the corresponding covariate. 
All covariates were then standardized to zero mean and unit variance using statistics computed on the training data, and the same transformation was applied to the test data. For the Gaussian task the response was set to the logarithm of the listing price, and was standardized using the training-set mean and standard deviation. For the Poisson task the response was set to bedroom count. 

\paragraph{\emph{Baseline neural architectures.}}
 In the OHE baseline the host indicator was represented as a one-hot vector and concatenated to the covariates before the MLP. In the Embed baseline each host is assigned a learned embedding of dimension 16, which is concatenated to the covariates and fed through the same MLP. 

\paragraph{\emph{Training and hyperparameters.}}
Hyper parameters for all methods were chosen with a grid search on one repetition of the experiment that was excluded for the analysis.
For NGMM on both Gaussian and Poisson responses we used Adam optimizer with learning rate $10^{-3}$, mini-batches of size 128, and 100 epochs. The random intercept was integrated over a fixed symmetric interval in the latent space using the same ODE solver settings as in the synthetic experiments. For BBVI we place independent Gaussian priors on all network parameters and random effects and use a fully factorized Gaussian variational family. In the Gaussian task we ran 5\,000 gradient steps of Adam with learning rate $10^{-3}$ and 5 Monte Carlo samples per step. In the Poisson task we used 10\,000 steps with the same settings. The OHE and Embed baselines were also optimized with Adam and learning rate $10^{-3}$, batch size 128, and 500 epochs.

\subsection{RxRx1 Experiment} \label{sup:rxrx}

Here we detail the setup for the RxRx1 microscopy experiment in Section~\ref{sec:rxrx}.

\paragraph{\emph{Binary task construction.}}
We fixed the untreated control label to the value used for EMPTY wells in the metadata and treated it as the negative class. In each repetition we selected one active siRNA in the training split and defined the positive class as this siRNA. We then restricted the dataset to images whose labels are either the chosen treated label or the control label and mapped the response to a binary variable with value 1 for treated and 0 for control.

\paragraph{\emph{Image pre-processing.}}
We worked directly with the microscopy images. Each image was resized to $256 \times 256$ pixels, centered, and cropped to this size. 

\paragraph{\emph{Neural architectures and baselines.}}
 For models with random effects, the fixed-effects logit was augmented by a scalar random intercept per group, so that the total log-odds for observation $(i,j)$ is $\eta_{ij} = \eta_x(x_{ij}) + \gamma_j$. 

In the OHE baseline groups were represented by a one-hot vector, concatenated to the CNN representation. The embedding baseline replaced the one-hot representation with a learned group embedding of dimension 64, which was concatenated to the CNN features and passed through the same classifier. 

\paragraph{\emph{Training and hyperparameters.}}
Hyper parameters for all methods were chosen with a grid search on one repetition of the experiment that was excluded for the analysis.
For NGMM we used Adam optimizer with learning rate $10^{-3}$. We ran the experiment with batch size 64 for 350 epochs in both evaluation regimes. The random-intercept integral was approximated over a fixed symmetric interval in the latent space using 32 ODE steps. For BBVI we use the same prior structure and ran 3\,000 gradient steps of Adam with learning rate $10^{-3}$, batch size 64, and 5 Monte Carlo samples per gradient estimate. The OHE and embedding baselines were trained with Adam optimizer, learning rate $10^{-3}$, batch size 64, and 300 epochs.

\subsection{PISA analysis} \label{sup:pisa}

As in the original PISA analysis, we analyze each country separately. 

\paragraph{\emph{Data pre-processing.}}
For a given country, we restrict to students and schools appearing in the cognitive assessment roster, and drop schools with fewer than $5$ participating students.
Background covariates $x_{ij}$ and $z_j$ are constructed from the student, parent, and school questionnaires while ignoring sampling weights and plausible values.  
We remove  variables with more than $98\%$ missing values or with more than $500$ distinct categories. For linear models we keep PCA components explaining $95\%$ of the variance, as in the original PISA analysis.
For neural models, we standardize continuous variables and augment categorical values with an explicit missing-value category.

We split the data by sampling schools uniformly at random to train ($80\%$), validation ($10\%$), and test ($10\%$). Within the training schools, we further hold out $20\%$ of the students in each training school to form a “same-school” test set, and use the remaining students for fitting.

\paragraph{\emph{Training and hyperparameters.}}

All four considered models (fixed-effects linear model,  random-effects linear model, fixed-effects neural model, neural mixed-effects model) are fit by maximizing the marginal log-likelihood. For the neural models,  the inner integral over $\varphi_{ij}$ in Equation \eqref{eq:marginal_likelihood} and the 
outer integral over school effects $\gamma_j$ are approximated with Gauss–Hermite (we use $10$ nodes for each integral). 

The neural models share the same architecture: categorical covariates are embedded and concatenated with standardized continuous covariates, then passed through a transformer encoder with model dimension $128$, $4$ self-attention layers, $8$ heads.
In the fixed-effects neural model this architecture maps $x_{ij}$ to a three-dimensional proficiency vector; in the neural mixed-effects model we use one such network for $x_{ij}$ and a second, smaller network for $z_j$ that produces a three-dimensional scale $g_{\psi}(z_j)$ for the school-level random effect in Equation \eqref{eq:pisa_neural_phi}.

All models are optimized with Adam optimizer with learning rate $10^{-4}$ using mini-batches of schools (up to $32$ schools and $4,000$ students per batch) for $1,000$ training iterations. 

\end{appendices}
\end{document}